\def\diag{\mathrm{diag}}
\newcommand{\E}{{\rm E}\,}
\newcommand{\sign}{{\rm sign}}
\newcommand{\Wish}{{\cal W}}
\newcommand{\tr}{{\rm trace}}
\newcommand{\unique}{{\rm unique}}
\newcommand{\arb}{{\rm ARB}}
\newcommand{\rmse}{{\rm RMSE}}
\newcommand{\D}{\mathcal{D}}
\newcommand{\Cov}[1]{\textrm{Cov}[#1]}
\newtheorem{thm}{Theorem}
\newtheorem{lemm}{Lemma}
\newtheorem{defi}{Definition}
\providecommand{\keywords}[1]{\noindent\textbf{Keywords:} #1}
\begin{document}

\title{A Novel Bayesian Approach for Latent Variable Modeling
	from Mixed Data with Missing Values
}


\author{\textbf{Ruifei Cui, Ioan Gabriel Bucur, Perry Groot, Tom Heskes} \\
	Institute for Computing and Information Sciences \\
	Radboud University Nijmegen \\
	The Netherlands \\
	\{r.cui, g.bucur, perry.groot, t.heskes\}@science.ru.nl
}




\maketitle

\begin{abstract}
	We consider the problem of learning parameters of latent variable models from mixed (continuous and ordinal) data with missing values. We propose a novel Bayesian Gaussian copula factor (BGCF) approach that is consistent under certain conditions and that is quite robust to the violations of these conditions. In simulations, BGCF substantially outperforms two state-of-the-art alternative approaches. An illustration on the `Holzinger \& Swineford 1939' dataset indicates that BGCF is favorable over the so-called robust maximum likelihood (MLR) even if the data match the assumptions of MLR.
\end{abstract} \\

\keywords{latent variables; Gaussian copula factor model; parameter learning; mixed data; missing values}

\section{Introduction}

In psychology, social sciences, and many other fields, researchers are usually interested in ``latent'' variables that cannot be measured directly, e.g., depression, anxiety, or intelligence. To get a grip on these latent concepts, one commonly-used strategy is to construct a measurement model for such a latent variable, in the sense that domain experts design multiple ``items'' or ``questions'' that are considered to be indicators of the latent variable. For exploring evidence of construct validity in theory-based instrument construction, confirmatory factor analysis (CFA) has been widely studied~\citep{joreskog1969general,castro2015likelihood,li2016confirmatory}. In CFA, researchers start with several hypothesised latent variable models that are then fitted to the data individually, after which the one that fits the data best is picked to explain the observed phenomenon. In this process, the fundamental task is to learn the parameters of a hypothesised model from observed data, which is the focus of this paper. For convenience, we simply refer to these hypothesised latent variable models as CFA models from now on.

The most common method for parameter estimation in CFA models is maximum likelihood (ML), because of its attractive statistical properties (consistency, asymptotic normality, and efficiency). The ML method, however, relies on the assumption that observed variables follow a multivariate normal distribution~\citep{joreskog1969general}. When the normality assumption is not deemed empirically
tenable, ML may not only reduce the accuracy of parameter estimates, but may also yield misleading conclusions drawn from empirical data~\citep{li2016confirmatory}. To this end, a robust version of ML was introduced for CFA models when the normality assumption is slightly or moderately violated~\citep{kaplan2008structural}, but still requires the observations to be continuous. In the real world, the indicator data in questionnaires are usually measured on an ordinal scale (resulting in a bunch of ordered categorical variables, or simply ordinal variables)~\citep{poon2012latent}, in which neither normality nor continuity is plausible~\citep{lubke2004applying}. In such cases, diagonally weighted least squares (DWLS in LISREL; WLSMV or robust WLS in M\textit{plus}) has been suggested to be superior to the ML method and is usually considered to be preferable over other methods~\citep{barendse2015using,li2016confirmatory}. 

However, there are two major issues that the existing approaches do not consider. One is the mixture of continuous and ordinal data. As we mentioned above ordinal variables are omnipresent in questionnaires, whereas sensor data are usually continuous. Therefore, a more realistic case in real applications is mixed continuous and ordinal data. A second important issue concerns missing values. In practice, all branches of experimental science are plagued by missing values~\citep{rja1987statistical}, e.g., failure of sensors, or unwillingness to answer certain questions in a survey. A straightforward idea in this case is to combine missing values techniques with existing parameter estimation approaches, e.g., performing listwise-deletion or pairwise-deletion first on the original data and then applying DWLS to learn parameters of a CFA model. However, such deletion methods are only consistent when the data are \textit{missing completely at random} (MCAR), which is a rather strong assumption~\citep{rubin1976inference}, and cannot transfer the sampling variability incurred by missing values to follow-up studies. The two modern missing data techniques, maximum likelihood and multiple imputation, are valid under a less restrictive assumption, \textit{missing at random} (MAR)~\citep{schafer2002missing}, but they require the data to be multivariate normal. 

Therefore, there is a strong demand for an approach that is not only valid under MAR but also works for mixed continuous and ordinal data. For this purpose, we propose a novel Bayesian Gaussian copula factor (BGCF) approach, in which a Gibbs sampler is used to draw pseudo Gaussian data in a latent space restricted by the observed data (unrestricted if that value is missing) and draw posterior samples of parameters given the pseudo data, iteratively. We prove that this approach is consistent under MCAR and empirically show that it works quite well under MAR.


The rest of this paper is organized as follows. Section~\ref{sec:background} reviews background knowledge and related work. Section~\ref{sec:method} gives the definition of a Gaussian copula factor model and presents our novel inference procedure for this model. Section~\ref{sec:simulation} compares our BGCF approach with two alternative approaches on simulated data, and Section~\ref{sec:application} gives an illustration on the `Holzinger \& Swineford 1939' dataset. Section~\ref{sec:conclusion} concludes this paper and provides some discussion.

\section{Background} \label{sec:background}

This section reviews basic missingness mechanisms and related work on parameter estimation in CFA models.

\subsection{Missingness Mechanism}

Following~\citet{rubin1976inference}, let $\bm{Y} = (y_{ij}) \in \mathbb{R}^{n \times p}$ be a data matrix with the rows representing independent samples, and $ \bm{R} = (r_{ij}) \in \{0,1\}^{n \times p}$ be a matrix of indicators, where $r_{ij} = 1$ if $y_{ij}$ was observed and $r_{ij} = 0$ otherwise. $\bm{Y}$ consists of two parts, $\bm{Y}_{obs}$ and $\bm{Y}_{miss}$, representing observed and missing elements in $\bm{Y}$ respectively. When the missingness does not depend on the data, i.e., $P(\bm{R}|\bm{Y}, \theta) = P(\bm{R}|\theta)$ with $\theta$ denoting unknown parameters, the data are said to be \emph{missing completely at random} (MCAR), which is a special case of a more realistic assumption called \emph{missing at random} (MAR). MAR allows the dependency between missingness and observed values, i.e., $P(\bm{R}|\bm{Y}, \theta) = P(\bm{R}|\bm{Y}_{obs},\theta)$. For example, all people in a group are required to take a blood pressure test at time point 1, while only those whose values at time point 1 lie in the abnormal range need to take the test at time point 2. This results in some missing values at time point 2 that are MAR. 

\subsection{Parameter Estimation in CFA Models}

When the observations follow a multivariate normal distribution, maximum likelihood (ML) is the mostly-used method. It is equivalent to minimizing the discrepancy function $F_{\rm{ML}}$~\citep{joreskog1969general}:
\[
F_{\rm{ML}} = \ln\lvert\Sigma(\theta)\lvert + \tr[S\Sigma^{-1}(\theta)] - \ln\lvert S\lvert - p \:,
\]
where $\theta$ is the vector of model parameters, $\Sigma(\theta)$ is the model-implied covariance matrix, $S$ is the sample covariance matrix, and $p$ is the number of observed variables in the model. When the normality assumption is violated either slightly or moderately, robust ML (MLR) offers an alternative. Here parameter estimates are still obtained using the asymptotically unbiased ML estimator, but standard errors are statistically corrected to enhance the robustness of ML against departures from normality~\citep{kaplan2008structural,muthen2010mplus}. Another method for continuous nonnormal data is the so-called asymptotically distribution free method, which is a weighted least squares (WLS)
method using the inverse of the asymptotic covariance matrix of the sample variances and
covariances as a weight matrix~\citep{browne1984asymptotically}.

When the observed data are on ordinal scales, \citet{muthen1984general} proposed a three-stage approach. It assumes that a normal latent variable $x^*$ underlies an observed ordinal variable $x$, i.e.,
\[
x = m, \mbox{~if~} \tau_{m-1} < x^* < \tau_m \:,
\]
where $m$ $(=1,2,...,c)$ denotes the observed values of $x$, $\tau_m$ are thresholds $(-\infty=\tau_0 < \tau_1 < \tau_2 < ... < \tau_c = +\infty)$, and $c$ is the number of categories. The thresholds and polychoric correlations are estimated from the bivariate contingency table in the first two stages~\citep{olsson1979maximum,joreskog2005structural}. Parameter estimates and the associated standard errors are then obtained by minimizing the weighted least squares
fit function $F_{\rm{WLS}}$:
\[
F_{\rm{WLS}} = [s-\sigma(\theta)]^T\bm{W}^{-1}[s-\sigma(\theta)]\:,
\]
where $\theta$ is the vector of model parameters, $\sigma(\theta)$ is the model-implied vector containing the nonredundant vectorized elements of $\Sigma(\theta)$, $s$ is the vector
containing the estimated polychoric correlations, and the weight matrix $\bm{W}$ is the asymptotic covariance matrix of the
polychoric correlations. A mathematically simple form of the WLS estimator, the unweighted least squares (ULS), arises when the matrix $\bm{W}$ is replaced with the identity matrix $\bm{I}$. Another variant of WLS is the diagonally weighted least squares (DWLS), in which only the diagonal elements of $\bm{W}$ are used in the fit function~\citep{muthen1997robust,muthen2010mplus}, i.e.,
\[
F_{\rm{DWLS}} = [s-\sigma(\theta)]^T\bm{W}^{-1}_{\rm{D}}[s-\sigma(\theta)]\:,
\]
where $\bm{W}^{-1}_{\rm{D}} = \diag(\bm{W})$ is the diagonal weight matrix. Various recent simulation studies have shown that DWLS is favorable compared to WLS, ULS, as well as the ML-based methods for ordinal data~\citep{barendse2015using,li2016confirmatory}.


\section{Method}
\label{sec:method}

In this section, we introduce the Gaussian copula factor model and propose a Bayesian inference procedure for this model. Then, we theoretically analyze the identifiability and prove the consistency of our procedure.

\subsection{Gaussian Copula Factor Model}\label{sec:model}

\begin{defi}[Gaussian Copula Factor Model] \label{def:GCFM}
	Consider a latent random (factor) vector $\bm{\eta} = (\eta_1,\ldots,\eta_k)^T$, a response random vector $\bm{Z} = (Z_1,\ldots,Z_p)^T$ and an observed random vector $\bm{Y}=(Y_1,\ldots,Y_p)^T$, satisfying
	\begin{gather}
		\label{eq:GCFM_latent}
		\bm{\eta} \sim \mathcal{N}(0,C),  \\
		\label{eq:GCFM_response}
		\bm{Z} = \Lambda \bm{\eta} + \bm{\epsilon}, \\
		\label{eq:GCFM_observe}
		Y_j = F_j^{-1}\big(\Phi\big[Z_j/\sigma(Z_j)\big]\big), \forall j = 1,\ldots,p,
	\end{gather}
	with $C$ a correlation matrix over factors, $\Lambda = (\lambda_{ij})$ a $p \times k$ matrix of factor loadings ($k \leq p$), $\bm{\epsilon} \sim \mathcal{N}(0,D)$ residuals with $D = \diag (\sigma_1^2,\ldots,\sigma_p^2)$, $\sigma(Z_j)$ the standard deviation of $Z_j$, $\Phi(\cdot)$ the cumulative distribution function (CDF) of the standard Gaussian, and ${F_{j}}^{-1}(t) = \inf\{ x: F_{j}(x) \geq t\}$ the pseudo-inverse of a CDF $F_j(\cdot)$. Then this model is called a \emph{Gaussian copula factor model}.
\end{defi}

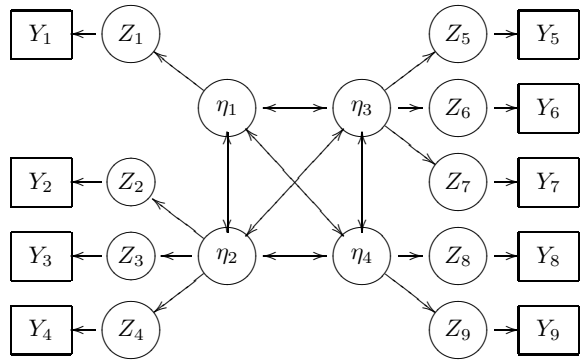
\begin{figure}[h]
	\centering
	\begin{tabular}{c}
		\parbox[t]{0.5\textwidth}{%
			\scalebox{0.9}{\centerline{\xymatrix @R=1.1em @C=1em{
						*=<2.5em,2em>[F]{Y_1} & *=<3em,2em>[o][F]{Z_1} \ar[l] & & & & *=<3em,2em>[o][F]{Z_{5}} \ar[r] &
						*=<2.5em,2em>[F]{Y_{5}} \\
						& &
						*=<3em,2em>[o][F]{\eta_1} \ar[ul] \ar@{<->}[rrdd] \ar@{<->}[dd] \ar@{<->}[rr]& &
						*=<3em,2em>[o][F]{\eta_3} \ar[ur] \ar[r] \ar[rd] \ar@{<->}[dd] \ar@{<->}[lldd]&
						*=<3em,2em>[o][F]{Z_{6}} \ar[r] &
						*=<2.5em,2em>[F]{Y_{6}} \\
						*=<2.5em,2em>[F]{Y_2}& *=<2.5em,2em>[o][F]{Z_2} \ar[l]
						& & & & *=<3em,2em>[o][F]{Z_{7}} \ar[r] & *=<2.5em,2em>[F]{Y_{7}} \\
						*=<2.5em,2em>[F]{Y_3}& *=<2.5em,2em>[o][F]{Z_3} \ar[l] &
						*=<3em,2em>[o][F]{\eta_2} \ar[ld] \ar[lu] \ar[l] \ar@{<->}[rr] & &
						*=<3em,2em>[o][F]{\eta_4} \ar[rd]  \ar[r]&
						*=<3em,2em>[o][F]{Z_{8}} \ar[r] &
						*=<2.5em,2em>[F]{Y_{8}}
						\\
						*=<2.5em,2em>[F]{Y_4} &
						*=<3em,2em>[o][F]{Z_{4}} \ar[l] & & & &
						*=<3em,2em>[o][F]{Z_{9}} \ar[r] &
						*=<2.5em,2em>[F]{Y_{9}} \\
		}}}}
	\end{tabular}
	\caption{Gaussian copula factor model.}
	\label{GaussianCopulaModelDemo}
\end{figure}

The model is also defined in~\citet{murray2013bayesian}, but the authors restrict the factors to be independent of each other while we allow for their interactions. Our model is a combination of a Gaussian factor model (from $\bm{\eta}$ to $\bm{Z}$) and a Gaussian copula model (from $\bm{Z}$ to $\bm{Y}$). The first part allows us to model the latent concepts that are measured by multiple indicators, and the second part provides a good way to model diverse types of variables (depending on $F_j(\cdot)$ in Equation~\ref{eq:GCFM_observe}, $Y_j$ can be either continuous or ordinal). Figure~\ref{GaussianCopulaModelDemo} shows an example of the model. Note that we allow the special case of a factor having a single indicator, e.g., $\eta_1 \rightarrow Z_1 \rightarrow Y_1$, because this allows us to incorporate other (explicit) variables (such as age and income) into our model. In this special case, we set $\lambda_{11} = 1$ and $\epsilon_1 = 0$, thus $Y_1 = F_1^{-1}(\Phi[\eta_1])$.

In the typical design for questionnaires, one tries to get a grip on a latent concept through a particular set of well-designed questions~\citep{martinez2006procedure,byrne2013structural}, which implies that a factor (latent concept) in our model is connected to multiple indicators (questions) while an indicator is only used to measure a single factor, as shown in Figure~\ref{GaussianCopulaModelDemo}. This kind of measurement model is called a \emph{pure measurement model} (Definition 8 in~\citet{silva2006learning}). Throughout this paper, we assume that all measurement models are pure, which indicates that there is only a single non-zero entry in each row of the factor loadings matrix $\Lambda$. This inductive bias about the sparsity pattern of $\Lambda$ is fully motivated by the typical design of a measurement model.

In what follows, we transform the Gaussian copula factor model into an equivalent model that is used for inference in the next subsection. We consider an integrated $(p + k)$-dimensional random vector $\bm{X} = (\bm{Z}^T, \bm{\eta}^T)^T$, which is still multivariate Gaussian, and obtain its covariance matrix
\begin{equation}
	\label{eq:cov_X}
	\Sigma = \begin{bmatrix}
		\Lambda C \Lambda^T + D & \Lambda C \\
		C \Lambda^T & C \\
	\end{bmatrix} \:,
\end{equation}
and precision matrix
\begin{equation}
	\label{eq:integratedPrecison}
	\Omega = \Sigma^{-1} = \begin{bmatrix}
		D^{-1} & -D^{-1} \Lambda \\
		-\Lambda^T D^{-1} & C^{-1} + \Lambda^T D^{-1} \Lambda \\
	\end{bmatrix} \:.
\end{equation}

Since $D$ is diagonal and $\Lambda$ only has one non-zero entry per row, $\Omega$ contains many intrinsic zeros. The sparsity pattern of such $\Omega = (\omega_{ij})$ can be represented by an undirected graph $G = (\bm{V}, \bm{E})$, where $(i,j) \not\in \bm{E}$ whenever $\omega_{ij} = 0$ by construction. 
Then, a Gaussian copula factor model can be transformed into an equivalent model controlled by a single precision matrix $\Omega$, which in turn is constrained by $G$, i.e., $P(\bm{X}|C,\Lambda,D) = P(\bm{X}|\Omega_G)$.

\begin{defi}[$G$-Wishart Distribution]
	Given an undirected graph $G = (\bm{V},\bm{E})$, a zero-constrained random matrix $\Omega$ has a $G$-Wishart distribution, if its density function is
	$$
	p(\Omega|G) = \frac{|\Omega|^{(\nu - 2)/2}}{I_G(\nu, \Psi)} \exp \bigg[-\frac{1}{2} \tr(\Psi \Omega)\bigg] \mathbbm{1}_{\Omega \in M^+(G)},
	$$
	with $M^+(G)$ the space of symmetric positive definite matrices with off-diagonal elements $\omega_{ij} = 0$ whenever $(i,j) \not\in \bm{E}$, $\nu$ the number of degrees of freedom, $\Psi$ a scale matrix, $I_G(\nu, \Psi)$ the normalizing constant, and $\mathbbm{1}$ the indicator function~\citep{roverato2002hyper}.
\end{defi}

The $G$-Wishart distribution is the conjugate prior of precision matrices $\Omega$ that are constrained by a graph $G$~\citep{roverato2002hyper}. That is, given the $G$-Wishart prior, i.e., $P(\Omega|G) = \Wish_G(\nu_0, \Psi_0)$ and data $\bm{X} = (\bm{x_1},\ldots,\bm{x_n})^T$ drawn from $\mathcal{N}(0,\Omega^{-1})$, the posterior for $\Omega$ is another $G$-Wishart distribution:
\begin{equation*}
	\label{posteriorDistribution}
	P(\Omega | G, \bm{X}) = \Wish_G (\nu_0 + n, \Psi_0 + \bm{X}^T \bm{X}).
\end{equation*}
When the graph $G$ is fully connected, the $G$-Wishart distribution reduces to a Wishart distribution~\citep{murphy2007conjugate}. Placing a $G$-Wishart prior on $\Omega$ is equivalent to placing an inverse-Wishart on $C$, a product of multivariate normals on $\Lambda$, and an inverse-gamma on the diagonal elements of $D$. With a diagonal scale matrix $\Psi_0$ and the number of degrees of freedom $\nu_0$ equal to the number of factors plus one, the implied marginal densities between any pair of factors are uniformly distributed between $[-1,1]$~\citep{barnard2000modeling}.

\subsection{Inference for Gaussian Copula Factor Model}

We first introduce the inference procedure for complete mixed data and incomplete Gaussian data respectively, based on which the procedure for mixed data with missing values is then derived. From this point on, we use $S$ to denote the correlation matrix over the response vector $\bm{Z}$.

\subsubsection{Mixed Data without Missing Values} \label{sec:infer_mixed_complete}

For a Gaussian copula model,~\citet{hoff2007extending} proposed a likelihood that only concerns the ranks among observations, which is derived as follows. Since the transformation $Y_j = F_j^{-1}\big(\Phi\big[Z_j\big]\big)$ is non-decreasing, observing $\bm{y}_j = (y_{1,j},\ldots,y_{n,j})^T$ implies a partial ordering on $\bm{z}_j = (z_{1,j},\ldots,z_{n,j})^T$, i.e., $\bm{z}_j$ lies in the space restricted by $\bm{y}_j$:
\[
\D(\bm{y}_j) = \{\bm{z}_j \in \mathbb{R}^n: y_{i,j} < y_{k,j} \Rightarrow z_{i,j} < z_{k,j}\} \:.
\]
Therefore, observing $\bm{Y}$ suggests that $\bm{Z}$ must be in
\[
\D(\bm{Y}) = \{\bm{Z} \in \mathbb{R}^{n \times p}: \bm{z}_j \in \D(\bm{y}_j), \forall j = 1,\ldots,p\} \:.
\]
Taking the occurrence of this event as the data, one can compute the following likelihood~\cite{hoff2007extending}
\begin{align*}
	P(\bm{Z} \in \D(\bm{Y})|S,F_1,\ldots,F_p) = P(\bm{Z} \in \D(\bm{Y})|S).
\end{align*}

Following the same argumentation, the likelihood in our Gaussian copula factor model reads
\begin{equation*}
	P(\bm{Z} \in \D(\bm{Y})|\bm{\eta},\Omega,F_1,\ldots,F_p) = P(\bm{Z} \in \D(\bm{Y})|\bm{\eta},\Omega), \:
\end{equation*}
which is independent of the margins $F_j$.

For the Gaussian copula factor model, inference for the precision matrix $\Omega$ of the vector $\bm{X} = (\bm{Z}^T, \bm{\eta}^T)^T$ can now proceed via construction of a Markov chain having its stationary distribution equal to $P(\bm{Z},\bm{\eta},\Omega|\bm{Z} \in \D(\bm{Y}),G)$, where we ignore the values for $\bm{\eta}$ and $\bm{Z}$ in our samples. The prior graph $G$ is uniquely determined by the sparsity pattern of the loading matrix $\Lambda = (\lambda_{ij})$ and the residual matrix $D$ (see Equation \ref{eq:integratedPrecison}), which in turn is uniquely decided by the pure measurement models. The Markov chain can be constructed by iterating the following three steps:
\begin{enumerate}
	\item \textbf{Sample $\bm{Z}$}: $\bm{Z} \sim P(\bm{Z}|\bm{\eta},\bm{Z} \in \D(\bm{Y}),\Omega)$; \\
	Since each coordinate $Z_j$ directly depends on only one factor, i.e., $\eta_q$ such that $\lambda_{jq} \neq 0$, we can sample each of them independently through
	$ Z_j \sim P(Z_j|\eta_q,\bm{z}_j \in \D(\bm{y}_j),\Omega) $.
	\item \textbf{Sample $\bm{\eta}$}: $\bm{\eta} \sim P(\bm{\eta}|\bm{Z},\Omega)$;
	\item \textbf{Sample $\Omega$}: $\Omega \sim P(\Omega|\bm{Z},\bm{\eta},G)$.
\end{enumerate}

\subsubsection{Gaussian Data with Missing Values} \label{sec:infer_Gaussian}

Suppose that we have Gaussian data $\bm{Z}$ consisting of two parts, $\bm{Z}_{obs}$ and $\bm{Z}_{miss}$, denoting observed and missing values in $\bm{Z}$ respectively. The inference for the correlation matrix of $\bm{Z}$ in this case can be done via the so-called data augmentation technique that is also a Markov chain Monte Carlo procedure and has been proven to be consistent under MAR~\citep{schafer1997analysis}. This approach iterates the following two steps to impute missing values (Step 1) and draw correlation matrix samples from the posterior (Step 2):
\begin{enumerate}
	\item  $\bm{Z}_{miss} \sim P(\bm{Z}_{miss}|\bm{Z}_{obs},S)$ ;
	\item  $S \sim P(S|\bm{Z}_{obs},\bm{Z}_{miss})$.
\end{enumerate}	

\subsubsection{Mixed Data with Missing Values}

For the most general case of mixed data with missing values, we combine the procedures of Sections~\ref{sec:infer_mixed_complete} and~\ref{sec:infer_Gaussian} into the following four-step inference procedure:
\begin{enumerate}
	\item  $\bm{Z}_{obs} \sim P(\bm{Z}_{obs}|\bm{\eta},\bm{Z}_{obs} \in \D(\bm{Y}_{obs}),\Omega)$;
	\item  $\bm{Z}_{miss} \sim P(\bm{Z}_{miss}|\bm{\eta},\bm{Z}_{obs},\Omega)$;
	\item  $\bm{\eta} \sim P(\bm{\eta}|\bm{Z}_{obs},\bm{Z}_{miss},\Omega)$;
	\item  $\Omega \sim P(\Omega|\bm{Z}_{obs},\bm{Z}_{miss},\bm{\eta},G)$.
\end{enumerate}

A Gibbs sampler that achieves this Markov chain is summarized in Algorithm~\ref{GS_GCFM} and implemented in \textsf{R}.\footnote{The code including those used in simulations and real-world applications is provided in \url{https://github.com/cuiruifei/CopulaFactorModel}.} Note that we put Step 1 and Step 2 together in the actual implementation since they share some common computations (lines 2 - 4). The difference between the two steps is that the values in Step 1 are drawn from a space restricted by the observed data (lines 5 - 13) while the values in Step 2 are drawn from an unrestricted space (lines 14 - 17). Another important point is that we need to relocate the data such that the mean of each coordinate of $\bm{Z}$ is zero (line 20). This is necessary for the algorithm to be sound because the mean may shift when missing values depend on the observed data (MAR).

\begin{algorithm}[!t]
	\caption{Gibbs sampler for Gaussian copula factor model with missing values}
	\label{GS_GCFM}
	\begin{algorithmic}[1]
		\REQUIRE Prior graph $G$, observed data $\bm{Y}$. \\
		\# \textbf{Step 1} and \textbf{Step 2}:
		\FOR{$j \in \{1,\ldots,p\}$}
		\STATE $q=$ factor index of $Z_j$
		\STATE $ a = \Sigma_{[j,q+p]} / \Sigma_{[q+p,q+p]}$
		\STATE $\sigma_j^2 = \Sigma_{[j,j]}-a \times \Sigma_{[q+p,j]}$ \\
		\# \textbf{Step 1}: $\bm{Z}_{obs} \sim P(\bm{Z}_{obs}|\bm{\eta},\bm{Z}_{obs} \in \D(\bm{Y}_{obs}),\Omega)$
		\FOR{$y \in \unique \{y_{1,j},\ldots,y_{n,j}\}$}
		\STATE $z_l = \max\{z_{i,j}:y_{i,j}<y\}$
		\STATE $z_u = \min\{z_{i,j}:y<y_{i,j}\}$
		\FOR{$i$ such that $\: y_{i,j} = y$}
		\STATE $\mu_{i,j} = \bm{\eta}_{[i,q]} \times a$
		\STATE $u_{i,j} \sim \mathcal{U}\big(\Phi\big[\frac{z_l-\mu_{i,j}}{\sigma_j}\big],\Phi\big[\frac{z_u-\mu_{i,j}}{\sigma_j}\big]\big)$
		\STATE $z_{i,j} = \mu_{i,j} + \sigma_j \times \Phi^{-1}(u_{i,j})$
		\ENDFOR
		\ENDFOR \\
		\# \textbf{Step 2}: $\bm{Z}_{miss} \sim P(\bm{Z}_{miss}|\bm{\eta},\bm{Z}_{obs},\Omega)$
		\FOR{$i$ such that $y_{i,j} \in \bm{Y}_{miss}$}
		\STATE $\mu_{i,j} = \bm{\eta}_{[i,q]} \times a$
		\STATE $z_{i,j} \sim \mathcal{N}(\mu_{i,j}, \sigma_j^2)$
		\ENDFOR	
		\ENDFOR
		\STATE $\bm{Z} = (\bm{Z}_{obs}, \bm{Z}_{miss})$ 
		\STATE $\bm{Z} = (\bm{Z}^T - \bm{\mu})^T$, with $\bm{\mu}$ the mean vector of $\bm{Z}$ \\
		\# \textbf{Step 3}: $\bm{\eta} \sim P(\bm{\eta}|\bm{Z},\Omega)$
		\STATE $A = \Sigma_{[\bm{\eta},\bm{Z}]}\Sigma_{[\bm{Z},\bm{Z}]}^{-1}$
		\STATE $B = \Sigma_{[\bm{\eta},\bm{\eta}]}-A\Sigma_{[\bm{Z},\bm{\eta}]}$
		\FOR{$i \in \{1,\ldots,n\}$}		
		\STATE $\bm{\mu}_i = (\bm{Z}_{[i,:]}A^T)^T$
		\STATE  $\bm{\eta}_{[i,:]} \sim \mathcal{N}(\bm{\mu}_i, B)$
		\ENDFOR
		\STATE $\bm{\eta}_{[:,j]} = \bm{\eta}_{[:,j]} \times \sign(\Cov{\bm{\eta}_{[:,j]}, \bm{Z}_{[:,f(j)]}}), \: \forall j$, where $f(j)$ is the index of the first indicator of $\eta_j$.  \\
		\# \textbf{Step 4}: $\Omega \sim P(\Omega|\bm{Z},\bm{\eta},G)$
		\STATE $\bm{X} = (\bm{Z}, \bm{\eta})$
		\STATE  $\Omega \sim \Wish_G(\nu_0 + n, \Psi_0 + \bm{X}^T\bm{X})$
		\STATE $\Sigma = \Omega^{-1}$
		\STATE $\Sigma_{ij} = \Sigma_{ij}/\sqrt{\Sigma_{ii}\Sigma_{jj}}, \forall i,j$
	\end{algorithmic}
\end{algorithm}

By iterating the steps in Algorithm~\ref{GS_GCFM}, we can draw correlation matrix samples over the integrated random vector $\bm{X}$, denoted by $\{\Sigma^{(1)},\ldots, \Sigma^{(m)}\}$. The mean over all the samples is a natural estimate of the true $\Sigma$, i.e.,
\begin{equation}
	\hat{\Sigma} = \dfrac{1}{m}\sum_{i = 1}^{m} \Sigma^{(i)} \:.\label{eq:Sigma}
\end{equation}
Based on Equations (\ref{eq:cov_X}) and (\ref{eq:Sigma}), we obtain estimates of the parameters of interests:
\begin{align}
	&\hat{C} = \hat{\Sigma}_{[\bm{\eta}, \bm{\eta}]}; \nonumber \\ 
	&\hat{\Lambda} = \hat{\Sigma}_{[\bm{Z}, \bm{\eta}]} \hat{C}^{-1}\: ; \\
	&\hat{D} = \hat{S} - \hat{\Lambda}\hat{C}\hat{\Lambda}^T, \mbox{~~with~~} \hat{S} = \hat{\Sigma}_{[\bm{Z}, \bm{Z}]} \nonumber \:.
\end{align}
We refer to this procedure as a \emph{Bayesian Gaussian copula factor approach} (BGCF).

\subsection{Theoretical Analysis}

\paragraph{Identifiability of $C$} Without additional constraints, $C$ is non-identifiable~\citep{anderson1956statistical}. More precisely, given a decomposable matrix $S = \Lambda C \Lambda^T + D$, we can always replace $\Lambda$ with $\Lambda U$ and $C$ with $U^{-1} C U^{-T}$ to obtain an equivalent decomposition $S = (\Lambda U)(U^{-1} C U^{-T})(U^T \Lambda ^T) + D$, where $U$ is a $k \times k$ invertible matrix. Since $\Lambda$ only has one non-zero entry per row in our model, $U$ can only be diagonal to ensure that $\Lambda U$ has the same sparsity pattern as $\Lambda$ (see Lemma~\ref{lemm:lambda} in Appendix). Thus, from the same $S$, we get a class of solutions for $C$, i.e., $U^{-1} C U^{-1}$,  where $U$ can be any invertible diagonal matrix. In order to get a unique solution for $C$, we impose two sufficient identifying conditions: 1) restrict $C$ to be a correlation matrix; 2) force the first non-zero entry in each column of $\Lambda$ to be positive. See Lemma~\ref{lemm:identifiability} in Appendix for proof. Condition 1 is implemented via line 31 in Algorithm~\ref{GS_GCFM}. As for the second condition, we force the covariance between a factor and its first indicator to be positive (line 27), which is equivalent to Condition 2. Note that these conditions are not unique; one could choose one's favorite conditions to identify $C$, e.g., setting the first loading to 1 for each factor. The reason for our choice of conditions is to keep it consistent with our model definition where $C$ is a correlation matrix.

\paragraph{Identifiability of $\Lambda$ and $D$} Under the two conditions for identifying $C$, factor loadings $\Lambda$ and residual variances $D$ are also identified except for the case in which there exists one factor that is independent of all the others and this factor only has two indicators. For such a factor, we have 4 free parameters (2 loadings, 2 residuals) while we only have 3 available equations (2 variances, 1 covariance), which yields an underdetermined system. See Lemmas~\ref{lemm:identifiability_lambda} and~\ref{lemm:identifiability_D} in Appendix for detailed analysis. Once this happens, one could put additional constraints to guarantee a unique solution, e.g., by setting the variance of the first residual to zero. However, we would recommend to leave such an independent factor out (especially in association analysis) or study it separately from the other factors. 



Under sufficient conditions for identifying $C$, $\Lambda$, and $D$, our BGCF approach is consistent even with MCAR missing values. This is shown in Theorem~\ref{thm:consistency}, whose proof is provided in Appendix.

\begin{thm}[Consistency of the BGCF Approach]
	\label{thm:consistency} 	
	%
	Let $\bm{Y}_n=(\bm{y}_1,\ldots,\bm{y}_n)^T$ be independent observations drawn from a Gaussian copula factor model. If $\bm{Y}_n$ is complete (no missing data) or contains missing values that are missing completely at random, then
	\begin{gather*}
		\lim\limits_{n \to\infty} P\big(\hat{C}_n = C_0\big) = 1 \:, \\
		\lim\limits_{n \to\infty} P\big(\hat{\Lambda}_n = \Lambda_0\big) = 1 \:, \\
		\lim\limits_{n \to\infty} P\big(\hat{D}_n = D_0\big) = 1 \:,
	\end{gather*}
	where $\hat{C}_n$, $\hat{\Lambda}_n$, and $\hat{D}_n$ are parameters learned by BGCF, while $C_0$, $\Lambda_0$, and $D_0$ are the true ones.
\end{thm}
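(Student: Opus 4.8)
The plan is to collapse all three claimed limits into a single convergence statement, namely that the posterior-mean estimator $\hat{\Sigma}_n$ of the integrated correlation matrix of $\bm{X} = (\bm{Z}^T,\bm{\eta}^T)^T$ converges (in probability) to the true $\Sigma_0$, and then to transfer this to the parameters by a continuous-mapping argument. Equation~(\ref{eq:cov_X}) together with the identifying conditions of Lemma~\ref{lemm:identifiability} makes the map $\Sigma \mapsto (C,\Lambda,D)$ a well-defined and continuous function on the relevant region: $\hat{C}$ is the factor block $\hat{\Sigma}_{[\bm{\eta},\bm{\eta}]}$, $\hat{\Lambda}$ is the off-diagonal block post-multiplied by $\hat{C}^{-1}$, and $\hat{D}$ is the residual $\hat{S}-\hat{\Lambda}\hat{C}\hat{\Lambda}^T$. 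Hence it suffices to prove $\hat{\Sigma}_n \to \Sigma_0$; the equalities appearing in the limit are understood as convergence in probability, and Lemmas~\ref{lemm:identifiability_lambda} and~\ref{lemm:identifiability_D} guarantee that each target is the unique preimage, so the limit is the true parameter rather than an equivalence class.

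First I would establish that the marginal posterior of the response correlation matrix $S$ (the correlation matrix of $\bm{Z}$) concentrates at $S_0$. Because each $Y_j = F_j^{-1}(\Phi[Z_j/\sigma(Z_j)])$ is monotone, the operative likelihood is the margin-free extended rank likelihood $P(\bm{Z}\in\D(\bm{Y})\mid S)$, and consistency of its Bayesian estimate is the copula-factor analogue of the result of~\citet{hoff2007extending}; since the $G$-Wishart prior places positive mass on every neighbourhood of $\Omega_0$, standard posterior-concentration reasoning applies. Under MCAR the indicator matrix $\bm{R}$ is independent of $\bm{Y}$, so the imputation step $\bm{Z}_{miss}\sim P(\bm{Z}_{miss}\mid\bm{Z}_{obs},S)$ does not distort the target and the augmented empirical moments remain consistent for $\Sigma_0$; this is where the data-augmentation consistency of~\citet{schafer1997analysis} enters. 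The mean-centering of line~20 ensures the zero-mean premise of the $G$-Wishart update survives any selection on observed coordinates.

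Next, conditional on pseudo-data whose empirical second moment $\bm{X}^T\bm{X}/n$ converges to $\Sigma_0$ by the law of large numbers, the $G$-Wishart posterior $\Wish_G(\nu_0+n,\Psi_0+\bm{X}^T\bm{X})$ has mean tending to $\Omega_0$ and dispersion of order $1/n$, so its mass concentrates at $\Omega_0$; by continuity of matrix inversion the induced posterior over $\Sigma=\Omega^{-1}$ concentrates at $\Sigma_0$, and the final correlation normalization (line~31) matches the identified copula scale. Averaging the Gibbs draws to form $\hat{\Sigma}_n$ preserves this limit, so combining the two concentration statements yields $\hat{\Sigma}_n\to\Sigma_0$, and the continuous-mapping step delivers $\hat{C}_n\to C_0$, $\hat{\Lambda}_n\to\Lambda_0$, and $\hat{D}_n\to D_0$.

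The hard part will be making the first step rigorous. The posterior is over the joint object $(\bm{Z},\bm{\eta},\Omega)$, with $S$ read off from $\Sigma$, and one must verify that integrating out the latent $\bm{\eta}$ and the continuous embedding $\bm{Z}$ does not destroy concentration, even though the rank likelihood identifies the correlation structure only on the copula scale. This is precisely where the pure-measurement sparsity of $\Lambda$ (Lemma~\ref{lemm:lambda}) and the resulting fixed graph $G$ are indispensable: they pin down which entries of $\Omega$ are structurally zero, letting the $G$-Wishart posterior target the correct low-dimensional parameter and letting the correlation block on $\bm{Z}$ determine the factor block of $\Omega_0$. Establishing the copula posterior-consistency estimate uniformly enough to push through the latent marginalization is the single most delicate ingredient of the argument.
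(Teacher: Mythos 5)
Your overall architecture is the same as the paper's: first establish posterior consistency of the correlation matrix of the response vector, then transfer it to $(C,\Lambda,D)$ through the identifiability lemmas and a continuous-mapping argument. The second half of your plan is exactly what the paper does, and your reading of the map $\hat{\Sigma}\mapsto(\hat{C},\hat{\Lambda},\hat{D})$ and of the role of Lemmas~\ref{lemm:identifiability}--\ref{lemm:identifiability_D} is correct. The problem is the first half, and you have flagged it yourself: the concentration of the posterior under the margin-free rank likelihood is deferred to ``the copula-factor analogue of the result of \citet{hoff2007extending}'' plus ``standard posterior-concentration reasoning,'' and the sketch you give in its place is circular. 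You argue that, conditional on pseudo-data whose empirical second moment $\bm{X}^T\bm{X}/n$ converges to $\Sigma_0$ ``by the law of large numbers,'' the $G$-Wishart posterior $\Wish_G(\nu_0+n,\Psi_0+\bm{X}^T\bm{X})$ concentrates at $\Omega_0$. But $\bm{Z}$ and $\bm{\eta}$ are not observations; they are draws from the very posterior whose concentration is to be proven, so no law of large numbers applies to $\bm{X}^T\bm{X}/n$ until one already knows that the stationary distribution of the chain concentrates. What you call ``the single most delicate ingredient'' is in fact the whole theorem; identifying it is not the same as proving it.

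The paper closes precisely this gap by invoking Theorem~1 of \citet{murray2013bayesian}, which gives posterior consistency of the response vector's correlation matrix $\widetilde{S}$ under the extended rank likelihood, after checking the one hypothesis that needs checking: the factor-analytic ($G$-Wishart) prior places positive density almost everywhere on the set of correlation matrices admitting a $k$-factor decomposition. With that citation the first stage is a one-step appeal to an existing result rather than a new concentration argument, and the rest of the proof is the identifiability analysis you also outlined. A secondary divergence: for MCAR data the paper again applies Murray's theorem, adding only the observation that ordinary and polychoric/polyserial correlations estimated from pairwise-complete data remain consistent under MCAR; your route through the data-augmentation consistency of \citet{schafer1997analysis} addresses missing values in Gaussian data under MAR, but it would still have to be spliced into the rank-likelihood concentration argument --- which is exactly the piece your proposal leaves open.
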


\section{Simulation Study} \label{sec:simulation}

In this section, we compare our BGCF approach with alternative approaches via simulations. 

\subsection{Setup}

\paragraph{Model specification} Following typical simulation studies on CFA models in the literature~\citep{yang2010confirmatory,li2016confirmatory}, we consider a correlated 4-factor model in our study. Each factor is measured by 4 indicators, since~\citet{marsh1998more} concluded that the accuracy of parameter estimates appeared to be optimal when the number of indicators per factor was four and marginally improved as the number increased. The interfactor correlations (off-diagonal elements of the correlation matrix $C$ over factors) are randomly drawn from [0.2, 0.4], which is considered a reasonable and empirical range in the applied literature~\citep{li2016confirmatory}. For the ease of reproducibility, we construct our $C$ as follows.
\begin{minipage}{\linewidth}
\begin{lstlisting}
set.seed(12345)
C <- matrix(runif(4^2, 0.2, 0.4), ncol=4)
C <- (C*lower.tri(C)) + t(C*lower.tri(C))
diag(C) <- 1
\end{lstlisting}
\end{minipage}
In the majority of empirical research and simulation studies~\citep{distefano2002impact}, reported standardized factor loadings range from 0.4 to 0.9. For facilitating interpretability and again reproducibility, each factor loading is set to 0.7. Each corresponding residual variance is then automatically set to 0.51 under a standardized solution in the population model, as done in~\citep{li2016confirmatory}.

\paragraph{Data generation} Given the specified model, one can generate data in the response space (the $\bm{Z}$ in Definition~\ref{def:GCFM}) via Equations \eqref{eq:GCFM_latent} and \eqref{eq:GCFM_response}. When the observed data (the $\bm{Y}$ in Definition~\ref{def:GCFM}) are ordinal, we discretize the corresponding margins into the desired number of categories. When the observed data are nonparanormal, we set the $F_j(\cdot)$ in Equation~\eqref{eq:GCFM_observe} to the CDF of a $\chi^2$-distribution with degrees of freedom \textit{df}. The reason for choosing a $\chi^2$-distribution is that we can easily use \textit{df} to control the extent of non-normality: a higher \textit{df} implies a distribution closer to a Gaussian. To fill in a certain percentage $\beta$ of missing values (we only consider MAR), we follow the procedure in~\citet{kolar2012estimating}, i.e., 
for $j = 1,\ldots,\lfloor p/2 \rfloor$, $i = 1,\ldots,n$: $y_{i,2*j}$ is missing if $z_{i,2*j-1} < \Phi^{-1}(2*\beta)$.
\paragraph{Evaluation metrics} We use average relative bias (ARB) and root mean squared error (RMSE) to examine the parameter estimates, which are defined as
\begin{gather*}
\arb = \dfrac{1}{r}\sum_{i = 1}^{r} \dfrac{\hat{\theta_i} - \theta_i}{\theta_i}\:, \:\:
\rmse = \sqrt{\dfrac{1}{r}\sum_{i = 1}^{r} (\hat{\theta_i} - \theta_i)^2} \:,
\end{gather*}
where $\hat{\theta_i}$ and $\theta_i$ represent the estimated and true values respectively. An ARB value less than 5\% is
interpreted as a \textit{trivial} bias, between 5\% and 10\% as a \textit{moderate} bias, and greater than 10\% as a \textit{substantial} bias~\citep{curran1996robustness}. Note that ARB describes an overall picture of average bias, that is, summing up bias
in a positive and a negative direction together. A smaller absolute value of ARB indicates better performance on average.

\subsection{Ordinal Data without Missing Values}

In this subsection, we consider ordinal complete data since this matches the assumptions of the diagonally weighted least squares (DWLS) method, in which we set the number of ordinal categories to be 4. We also incorporate the robust maximum likelihood (MLR) as an alternative approach, which was shown to be empirically tenable when the number of categories is more than 5~\citep{rhemtulla2012can,li2016confirmatory}. See Section~\ref{sec:background} for details of the two approaches.

Before conducting comparisons, we first check the convergence property of the Gibbs sampler used in our BGCF approach. Figure~\ref{fig:convergence} shows the RMSE of estimated interfactor correlations (left panel) and factor loadings (right panel) over 100 iterations for a randomly-drawn sample with sample size $n=500$. We see quite a good convergence of the Gibbs sampler, in which the burn-in period is only around 10. More experiments done for different numbers of categories and different random samples show that the burn-in is less than 20 on the whole across various conditions.
 
\begin{figure}[h]
	\centering	
	\includegraphics[scale=0.54]{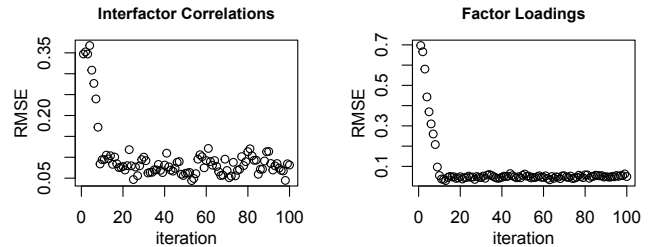}
	\caption{Convergence property of our Gibbs sampler over 100 iterations. Left panel: RMSE of interfactor correlations; Right panel: RMSE of factor loadings.}
	\label{fig:convergence}
\end{figure}

Now we evaluate the three involved approaches. Figure~\ref{fig:complete} shows the performance of BGCF, DWLS, and MLR over different sample sizes $n \in \{100, 200, 500, 1000\}$, providing the mean of ARB (left panel) and the mean of RMSE with 95\% confidence interval (right panel) over 100 experiments. From Figure~\ref{fig:corr_complete}, interfactor correlations are, on average, trivially biased (within two dashed lines) for all the three methods that in turn give indistinguishable RMSE regardless of sample sizes. From Figure~\ref{fig:loadings_complete}, MLR moderately underestimates the factor loadings, and performs worse than DWLS w.r.t. RMSE especially for a larger sample size, which confirms the conclusion in previous studies~\citep{barendse2015using,li2016confirmatory}. Most importantly, our BGCF approach outperforms DWLS in learning factor loadings especially for small sample sizes, even if the experimental conditions entirely match the assumptions of DWLS.

\begin{figure}[t]
	\centering	
	\subfloat[Interfactor Correlations]{\includegraphics[scale=0.7]{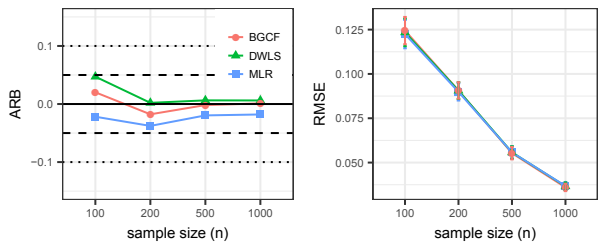}\label{fig:corr_complete}}
	\hfill
	\subfloat[Factor Loadings]{\includegraphics[scale=0.7]{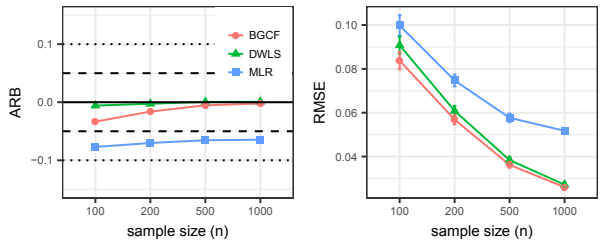}\label{fig:loadings_complete}}
	\caption{Results obtained by the Bayesian Gaussian copula factor (BGCF) approach, the diagonally weighted least squares (DWLS), and the robust maximum likelihood (MLR) on complete ordinal data (4 categories) over different sample sizes, showing the mean of ARB (left panel) and the mean of RMSE with 95\% confidence interval (right panel) over 100 experiments for (a) interfactor correlations and (b) factor loadings, where dashed lines and dotted lines in left panels denote $\pm 5\%$ and $\pm 10\%$ bias respectively.}
	\label{fig:complete}
\end{figure}

\subsection{Mixed Data with Missing Values}

In this subsection, we consider mixed nonparanormal and ordinal data with missing values, since some latent variables in real-world applications are measured by sensors that usually produce continuous but not necessarily Gaussian data. The 8 indicators of the first 2 factors (4 per factor) are transformed into a $\chi^2$-distribution with $df = 8$, which yields a slightly-nonnormal distribution (skewness is 1, excess kurtosis is 1.5)~\citep{li2016confirmatory}. The 8 indicators of the last 2 factors are discretized into ordinal with 4 categories. 

One alternative approach in such cases is DWLS with pairwise-deletion (PD), in which heterogeneous correlations (Pearson correlations between numeric variables, polyserial correlations between numeric and ordinal variables, and polychoric correlations between ordinal variables) are first computed based on pairwise complete observations, and then DWLS is used to estimate model parameters. A second alternative concerns the full information maximum likelihood (FIML)~\citep{arbuckle1996full,rosseel2012lavaan}, which first applies an EM algorithm to impute missing values and then uses MLR to learn model parameters.

Figure~\ref{fig:incomplete} shows the performance of BGCF, DWLS with PD, and FIML for $n = 500$ over different percentages of missing values $\beta \in \{0\%, 10\%, 20\%, 30\%\}$. First, despite a good performance with complete data ($\beta = 0\%$) DWLS (with PD) deteriorates significantly with an increasing percent of missing values especially for factor loadings, while BGCF and FIML show quite good scalability. Second, our BGCF approach overall outperforms FIML: indistinguishable for interfactor correlations but better for factor loadings.

\begin{figure}[t]
	\centering	
	\subfloat[Interfactor Correlations]{\includegraphics[scale=0.7]{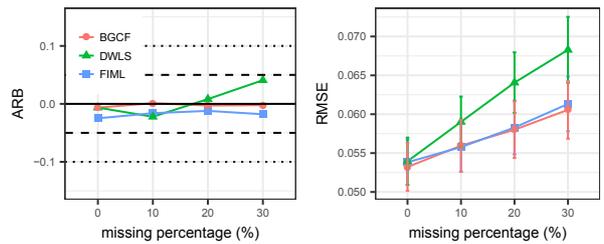}}\label{fig:corr_incomplete}
	\subfloat[Factor Loadings]{\includegraphics[scale=0.7]{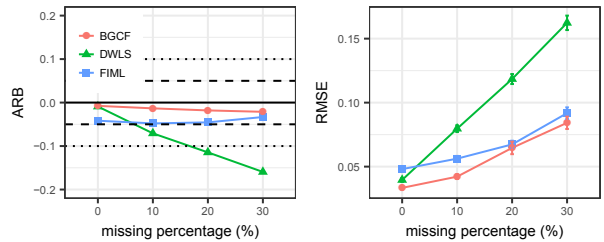}}\label{fig:loadings_incomplete}
	\caption{Results for $n = 500$ obtained by BGCF, DWLS with pairwise-deletion, and the full information maximum likelihood (FIML) on mixed nonparanormal (\textit{df} = 8) and ordinal (4 categories) data with different percentages of missing values, for the same experiments as in Figure~\ref{fig:complete}.}
	\label{fig:incomplete}
\end{figure}

Two more experiments are provided in Appendix. One concerns incomplete ordinal data with different numbers of categories, showing that BGCF is substantially favorable over DWLS (with PD) and FIML for learning factor loadings, which becomes more prominent with a smaller number of categories. Another one considers incomplete nonparanormal data with different extents of deviation from a Gaussian, which indicates that FIML is rather sensitive to the deviation and only performs well for a slightly-nonnormal distribution while the deviation has no influence on BGCF at all. See Appendix for more details.

\section{Application to Real-world Data} \label{sec:application}

In this section, we illustrate our approach on the `Holzinger \& Swineford 1939' dataset~\citep{holzinger1939study}, a classic dataset widely used in the literature and publicly available in the \textsf{R} package \textbf{lavaan}~\citep{rosseel2012lavaan}. The data consists of mental ability test scores of 301 students, in which we focus on 9 out of the original 26 tests as done in~\citet{rosseel2012lavaan}. A latent variable model that is often proposed to explore these 9 variables is a correlated 3-factor model shown in Figure~\ref{fig:HS_path}, where we rename the observed variables to ``Y1, Y2, \ldots, Y9'' for simplicity in visualization and to keep it identical to our definition of observed variables (Definition~\ref{def:GCFM}). The interpretation of these variables is given in the following list.

\begin{itemize}
	\item Y1: Visual perception;
	\item Y2: Cubes;
	\item Y3: Lozenges;
	\item Y4: Paragraph comprehension;
	\item Y5: Sentence completion;
	\item Y6: Word meaning;
	\item Y7: Speeded addition;
	\item Y8: Speeded counting of dots;
	\item Y9: Speeded discrimination straight and curved capitals.
\end{itemize}

\begin{figure*}[t]
	\centering
	\begin{tabular}{c}
		\parbox[t]{0.5\textwidth}{%
			\scalebox{1.1}{\centerline{\xymatrix @R=2em @C=3em{
						*=<2.5em,2em>[F]{Y_1} \ar@{<->}@(ul,ur)^{0.42} & *=<2.5em,2em>[F]{Y_2} \ar@{<->}@(ul,ur)^{0.83} & *=<2.5em,2em>[F]{Y_3} \ar@{<->}@(ul,ur)^{0.68} & & *=<2.5em,2em>[F]{Y_4} \ar@{<->}@(ru,rd)^{0.29}\\	
						& *=<4em,2em>[o][F]{visual} \ar[lu]|-{0.76} \ar[u]|-{0.41} \ar[ru]|-{0.57} \ar[rr]^{0.44} \ar[dd]^{0.47} & & *=<4em,2em>[o][F]{textual} \ar[ru]|-{0.84} \ar[r]|-{0.87} \ar[rd]|-{0.84} \ar[ll] \ar[lldd]^{0.28} & *=<2.5em,2em>[F]{Y_5} \ar@{<->}@(ru,rd)^{0.25} \\
						*=<2.5em,2em>[F]{Y_7} \ar@{<->}@(lu,ld)_{0.67} & & & & *=<2.5em,2em>[F]{Y_6} \ar@{<->}@(ru,rd)^{0.30} \\
						*=<2.5em,2em>[F]{Y_8} \ar@{<->}@(lu,ld)_{0.48} & *=<4em,2em>[o][F]{speed} \ar[lu]|-{0.58} \ar[l]|-{0.72} \ar[ld]|-{0.66} \ar[uu] \ar[rruu] & & &  \\
						*=<2.5em,2em>[F]{Y_9} \ar@{<->}@(lu,ld)_{0.57} & & & & 
		}}}}
	\end{tabular}
	\caption{Path diagram for the Holzinger \& Swineford data, in which latent variables are in ovals while observed variables are in squares, bidirected edges between latent variables denote correlation coefficients (interfactor correlations), directed edges denote factor loadings, and self-referring arrows denote residual variance, respectively. The edge weights in the graph are the model parameters learned by our BGCF approach.}
	\label{fig:HS_path}
\end{figure*}
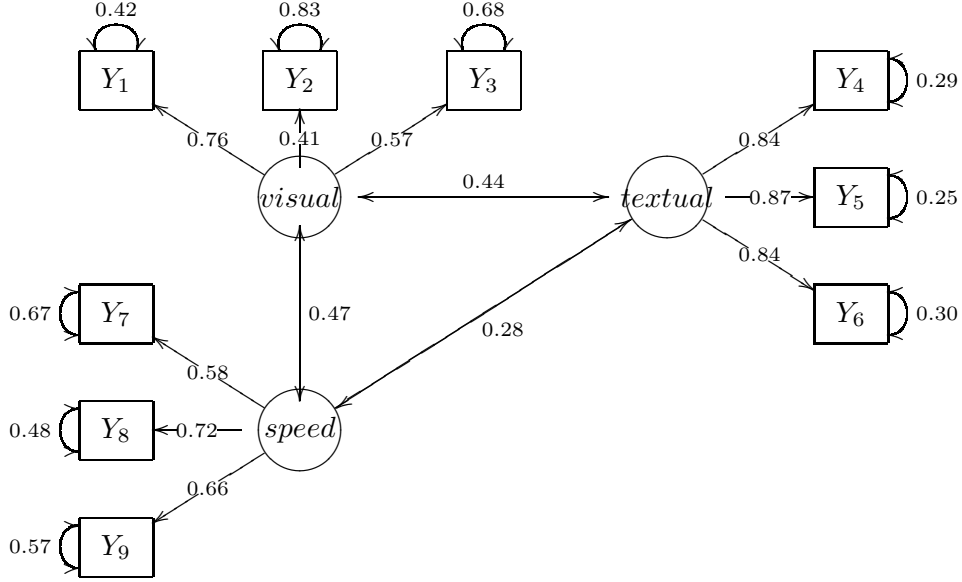

The summary of the 9 variables in this dataset is provided in Table~\ref{tab:summary_HS}, showing the number of unique values, skewness, and (excess) kurtosis for each variable. From the column of uniques values, we notice that the data are approximately continuous. The average of `absolute skewness' and `absolute excess kurtosis' over the 9 variables are around 0.40 and 0.54 respectively, which is considered to be slightly nonnormal~\citep{li2016confirmatory}. Therefore, we choose MLR as the alternative to be compared with our BGCF approach, since these conditions match the assumptions of MLR.

\begin{table}[h]
	\caption{The number of unique values, skewness, and (excess) kurtosis of each variable in the `HolzingerSwineford1939' dataset.}
	\label{tab:summary_HS}
	\begin{center}
		\begin{small}
			\begin{sc}
				\begin{tabular}{ccccc}
					\toprule
					Variables & Unique Values & Skewness & Kurtosis \\
					\midrule
					Y1 &  35 & -0.26 & 0.33 \\ 
					Y2 &  25 & 0.47 & 0.35 \\ 
					Y3 &  35 & 0.39 & -0.89 \\ 
					Y4 &  20 & 0.27 & 0.10 \\ 
					Y5 &  25 & -0.35 & -0.54 \\ 
					Y6 &  40 & 0.86 & 0.84 \\ 
					Y7 &  97 & 0.25 & -0.29 \\ 
					Y8 &  84 & 0.53 & 1.20 \\ 
					Y9 & 129 & 0.20 & 0.31 \\ 
					\bottomrule
				\end{tabular}
			\end{sc}
		\end{small}
	\end{center}
	\vskip -0.1in
\end{table}

We run our Bayesian Gaussian copula factor approach on this dataset. The learned parameter estimates are shown in Figure~\ref{fig:HS_path}, in which interfactor correlations are on the bidirected edges, factor loadings are in the directed edges, and unique variance for each variable is around the self-referring arrows. The parameters learned by the MLR approach are not shown here, since we do not know the ground truth so that it is hard to conduct a comparison between the two approaches.

In order to compare the BGCF approach with MLR quantitatively, we consider answering the question: ``What is the value of $Y_j$ when we observe the values of the other variables, denoted by $\bm{Y}_{\backslash j}$, given the population model structure in Figure~\ref{fig:HS_path}?". 

This is a regression problem but with additional constraints to obey the population model structure. The difference from a traditional regression problem is that we should learn the regression coefficients from the model-implied covariance matrix rather than the sample covariance matrix over observed variables.

\begin{itemize}
	\item For MLR, we first learn the model parameters on the training set, from which we extract the linear regression intercept and coefficients of $Y_j$ on $\bm{Y}_{\backslash j}$. Then we predict the value of $Y_j$ based on the values of $\bm{Y}_{\backslash j}$. See Algorithm~\ref{alg:MLR_reg} for pseudo code of this procedure.
	\item For BGCF, we first estimate the correlation matrix $\hat{S}$ over response variables (the $\bm{Z}$ in Definition~\ref{def:GCFM}) and the empirical CDF $\hat{F}_j$ of $Y_j$ on the training set. Then we draw latent Gaussian data $Z_j$ given $\hat{S}$ and $\bm{Y}_{\backslash j}$, i.e., $P(Z_j |\hat{S}, \bm{Z}_{\backslash j} \in \mathcal{D}(\bm{Y}_{\backslash j}))$. Lastly, we obtain the value of $Y_j$ from $Z_j$ via $\hat{F}_j$, i.e., $Y_j = \hat{F}_j^{-1} \big(\Phi[Z_j]\big)$. See Algorithm~\ref{alg:BGCF_reg} for pseudo code of this procedure. Note that we iterate the prediction stage (lines 7-8) for multiple times in the actual implementation to get multiple solutions to $Y_j^{(new)}$, then the average over these solutions is taken as the final predicted value of $Y_j^{(new)}$. This idea is quite similar to multiple imputation. 
\end{itemize}

\begin{algorithm}[H]
	\caption{Pseudo code of MLR for regression.}
	\label{alg:MLR_reg}
	\begin{algorithmic}[1]
		\STATE \textbf{Input:} $\bm{Y}^{(train)}$ and $\bm{Y}_{\backslash j}^{(new)}$.
		\STATE \textbf{Output:} $Y_j^{(new)}$.
		\STATE \textbf{Training Stage:}
		\STATE Fit the model using MLR on $\bm{Y}^{(train)}$;
		\STATE Extract the model-implied covariance matrix from the fitted model, denoted by $\hat{S}$;
		\STATE Extract regression coefficients $\bm{b}$ of $Y_j$ on $\bm{Y}_{\backslash j}$ from $\hat{S}$, that is, $\bm{b} = \hat{S}_{[\backslash j,\backslash j]}^{-1} \hat{S}_{[\backslash j,j]}$;
		\STATE Obtain the regression intercept $b_0$, that is, \\ $b_0 = \E (Y_j^{(train)}) - \bm{b} \cdot \E (\bm{Y}_{\backslash j}^{(train)})$.
		\STATE \textbf{Prediction Stage:}
		\STATE $Y_j^{(new)} = b_0 + \bm{b} \cdot \bm{Y}_{\backslash j}^{(new)}$.
	\end{algorithmic}
\end{algorithm}

\begin{algorithm}[H]
	\caption{Pseudo code of BGCF for regression.}
	\label{alg:BGCF_reg}
	\begin{algorithmic}[1]
		\STATE \textbf{Input:} $\bm{Y}^{(train)}$ and $\bm{Y}_{\backslash j}^{(new)}$.
		\STATE \textbf{Output:} $Y_j^{(new)}$.
		\STATE \textbf{Training Stage:}
		\STATE Apply BGCF to learn the correlation matrix over response variables, i.e., $\hat{S} = \hat{\Sigma}_{[\bm{Z}, \bm{Z}]}$;
		\STATE Learn the empirical cumulative distribution function of $Y_j$, denoted by $\hat{F}_j$.
		\STATE \textbf{Prediction Stage:}
		\STATE Sample $Z_j^{(new)}$ from $P(Z_j^{(new)} |\hat{S}, \bm{Z}_{\backslash j} \in \mathcal{D}(\bm{Y}_{\backslash j}))$;
		\STATE Obtain $Y_j^{(new)}$, i.e., $Y_j^{(new)} = \hat{F}_j^{-1} \big(\Phi[Z_j^{(new)}]\big)$.
	\end{algorithmic}
\end{algorithm}

\begin{figure*}
	\centering
	\includegraphics[scale = 0.9]{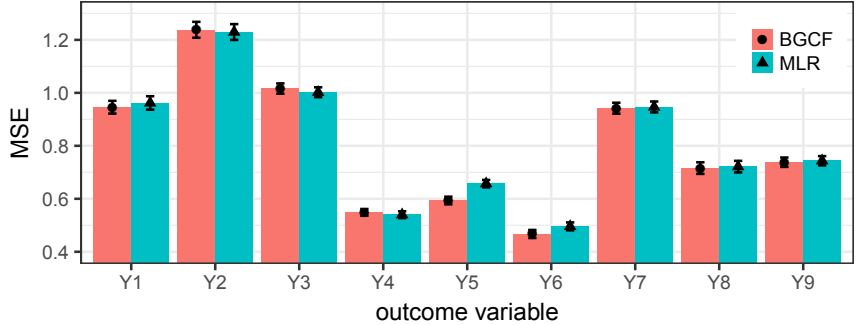}
	\caption{MSE obtained by BGCF and MLR when we take each $Y_j$ as outcome variable (the others as predictors) alternately, showing the mean over 100 experiments (10 times 10-fold cross validation) with error bars representing a standard error.}
	\label{fig:HS_MSE}
\end{figure*}

The mean squared error (MSE) is used to evaluate the prediction accuracy, where we repeat a 10-fold cross validation for 10 times (thus 100 MSE estimates totally). Also, we take $Y_j$ as the outcome variable alternately while treating the others as predictors (thus 9 tasks totally). Figure~\ref{fig:HS_MSE} provides the results of BGCF and MLR for all the 9 tasks, showing the mean of MSE with a standard error represented by error bars over the 100 estimates. We see that BGCF outperforms MLR for Tasks 5 and 6 although they perform indistinguishably for the other tasks. The advantage of BGCF over MLR is encouraging, considering that the experimental conditions match the assumptions of MLR. More experiments are done (not shown) after we make the data moderately or substantially nonnormal, suggesting that BGCF is significantly favorable to MLR, as expected.

\section{Summary and Discussion} \label{sec:conclusion}

In this paper, we proposed a novel Bayesian Gaussian copula factor (BGCF) approach for learning parameters of CFA models that can handle mixed continuous and ordinal data with missing values. We analyzed the separate identifiability of interfactor correlations $C$, factor loadings $\Lambda$, and residual variances $D$, since different researchers may care about different parameters. For instance, it is sufficient to identify $C$ for researchers interested in learning causal relations among latent variables~\citep{silva2006bayesian,silva2006learning,cui2016copula}, with no need to worry about additional conditions to identify $\Lambda$ and $D$. Under sufficient identification conditions, we proved that our approach is consistent for MCAR data and empirically showed that it works quite well for MAR data.

In the experiments, our approach outperforms DWLS even under the assumptions of DWLS. Apparently, the approximations inherent in DWLS, such as the use of the polychoric correlation and its asymptotic covariance, incur a small loss in accuracy compared to an integral approach like the BGCF. When the data follow from a more complicated distribution and contain missing values, the advantage of BGCF over its competitors becomes more prominent. Another highlight of our approach is that the Gibbs sampler converges quite fast, where the burn-in period is rather short. To further reduce the time complexity, a potential optimization of the sampling process is available~\citep{kalaitzis2013flexible}.

There are various generalizations to our inference approach. While our focus in this paper is on the correlated $k$-factor models, it is straightforward to extent the current procedure to other class of latent models that are often considered in CFA, such as bi-factor models and second-order models, by simply adjusting the sparsity structure of the prior graph $G$. Also, one may concern models with impure measurement indicators, e.g., a model with an indicator measuring multiple factors or a model with residual covariances~\citep{bollenstructural}, which can be easily solved with BGCF by changing the sparsity pattern of $\Lambda$ and $D$. Another line of future work is to analyze standard errors and confidence intervals while this paper concentrates on the accuracy of parameter estimates. Our conjecture is that BGCF is still favorable because it naturally transfers the extra variability incurred by missing values to the posterior Gibbs samples: we indeed observed a growing variance of the posterior distribution with the increase of missing values in our simulations. On top of the posterior distribution, one could conduct further studies, e.g., causal discovery over latent factors~\citep{silva2006learning,cui2018learning}, regression analysis (as we did in Section~\ref{sec:application}), or other machine learning tasks.

\section*{Appendix A: Proof of Theorem 1}
\setcounter{thm}{0}
\begin{thm}[Consistency of the BGCF Approach]
	\label{thm:consistency} 		
	Let $\bm{Y}_n=(\bm{y}_1,\ldots,\bm{y}_n)^T$ be independent observations drawn from a Gaussian copula factor model. If $\bm{Y}_n$ is complete (no missing data) or contains missing values that are missing completely at random, then
	\begin{gather*}
	\lim\limits_{n \to\infty} P\big(\hat{C}_n = C_0\big) = 1 \:, \\
	\lim\limits_{n \to\infty} P\big(\hat{\Lambda}_n = \Lambda_0\big) = 1 \:, \\
	\lim\limits_{n \to\infty} P\big(\hat{D}_n = D_0\big) = 1 \:,
	\end{gather*}
	where $\hat{C}_n$, $\hat{\Lambda}_n$, and $\hat{D}_n$ are parameters learned by BGCF, while $C_0$, $\Lambda_0$, and $D_0$ are the true ones.
\end{thm}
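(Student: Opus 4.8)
The plan is to split the argument into two parts: first establish that the estimated response correlation matrix $\hat{S}_n = \hat{\Sigma}_{[\bm{Z},\bm{Z}]}$ converges in probability to the truth $S_0$, and second transfer this convergence through the (identifiable) factor decomposition to obtain consistency of $\hat{C}_n$, $\hat{\Lambda}_n$, and $\hat{D}_n$. The second part is essentially a deterministic change of variables once identifiability is in hand, so the substantive work lies in the first part. More precisely, I would reduce the whole claim to posterior consistency of the graph-constrained precision matrix $\Omega$: since the $G$-Wishart prior restricted to $M^+(G)$ is equivalent to placing priors on $(C,\Lambda,D)$, and since $\Sigma=\Omega^{-1}$ depends continuously on $\Omega$, it suffices to show that the posterior $P(\Omega \mid \bm{Z}\in\D(\bm{Y}),G)$ concentrates on $\Omega_0$.

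For the first part I would invoke the extended rank likelihood of~\citet{hoff2007extending}: the likelihood $P(\bm{Z}\in\D(\bm{Y})\mid S)$ depends on the nuisance margins $F_j$ only through the observed ranks and, marginally over $\bm{\eta}$ and $\bm{Z}$, depends on $\Omega$ only through the response block $S=\Sigma_{[\bm{Z},\bm{Z}]}$. Standard Bayesian asymptotics for a well-specified, finite-dimensional, identifiable family (Doob's theorem, or a Bernstein--von Mises argument) then drive the posterior mass onto the true $S_0$, given that the $G$-Wishart prior places positive density in every neighbourhood of $\Omega_0$ inside $M^+(G)$. The pure-measurement graph $G$ together with the two identifying conditions then forces the implied $\Omega$ (equivalently $(C,\Lambda,D)$) to the unique value $\Omega_0$ compatible with $S_0$. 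To handle missing values I would appeal to ignorability: under MCAR we have $P(\bm{R}\mid\bm{Y},\theta)=P(\bm{R}\mid\theta)$, so the missingness contributes a factor independent of $\theta$ to the likelihood and introduces no bias into the rank information, and the step that imputes $\bm{Z}_{miss}$ from the current $\Sigma$ is exactly the consistent data-augmentation scheme of~\citet{schafer1997analysis}. Combining these yields $\hat{\Sigma}_n\to\Sigma_0$ in probability.

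For the second part I would use the identifiability lemmas established in the Appendix (Lemmas~\ref{lemm:identifiability_lambda} and~\ref{lemm:identifiability_D}). Under the pure-measurement structure encoded by $G$ together with the two identifying conditions (restricting $C$ to be a correlation matrix, implemented at line~31, and fixing the sign of the first loading in each column of $\Lambda$, implemented at line~27), the recovery maps $\Sigma\mapsto\hat{C}=\Sigma_{[\bm{\eta},\bm{\eta}]}$, $\Sigma\mapsto\hat{\Lambda}=\Sigma_{[\bm{Z},\bm{\eta}]}\hat{C}^{-1}$, and $\Sigma\mapsto\hat{D}=\Sigma_{[\bm{Z},\bm{Z}]}-\hat{\Lambda}\hat{C}\hat{\Lambda}^T$ are continuous on a neighbourhood of $\Sigma_0$ and return exactly $(C_0,\Lambda_0,D_0)$ there. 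The continuous mapping theorem then propagates $\hat{\Sigma}_n\to\Sigma_0$ to the three parameter estimates. Because the space of correlation matrices is compact and the posterior concentrates, the posterior \emph{mean} $\hat{\Sigma}_n$ in Equation~\eqref{eq:Sigma} inherits the concentration, so averaging the Gibbs samples does not spoil consistency.

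The main obstacle I expect is the rigorous justification of posterior consistency under the \emph{rank} likelihood combined with MCAR deletion, rather than under a fully observed Gaussian likelihood. One must argue that the rank statistics asymptotically identify $S_0$ (so that the nuisance margins $F_j$ genuinely drop out in the limit), that MCAR leaves these statistics unbiased, and that the posterior \emph{mean} of the Gibbs output converges and not merely the posterior mode. By contrast, the passage from $S_0$ to $(C_0,\Lambda_0,D_0)$ is comparatively routine once the identifiability lemmas are available, since it is a deterministic, continuous reparametrisation.
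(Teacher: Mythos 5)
Your overall architecture is the same as the paper's: prove posterior consistency of the response correlation matrix $S$, then push it through the identifiability lemmas and the continuous mapping theorem to $(C,\Lambda,D)$; your second part matches the paper's essentially step for step. The gap is in the first part, and it is exactly the obstacle you flag at the end but do not resolve. You propose to obtain concentration of the posterior on $S_0$ from ``standard Bayesian asymptotics for a well-specified, finite-dimensional, identifiable family (Doob's theorem, or a Bernstein--von Mises argument)''. Neither tool applies off the shelf here. The extended rank likelihood is a partial likelihood built from the event $\bm{Z}_n\in\D(\bm{Y}_n)$; the model is semiparametric, since the margins $F_j$ are infinite-dimensional nuisance parameters that are never assigned a prior but are conditioned away through the ranks, so this is not a finite-dimensional well-specified family in the sense Doob's theorem requires --- and Doob's theorem in any case yields consistency only at prior-almost-every parameter value, not at a given true $(C_0,\Lambda_0,D_0)$. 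A Bernstein--von Mises argument would require establishing regularity (e.g.\ LAN-type conditions) for the rank likelihood, which is itself a substantial result. This is precisely the hole the paper fills by invoking Theorem~1 of \citet{murray2013bayesian}, proved for exactly this setting: a prior putting positive density near every correlation matrix admitting a $k$-factor decomposition, combined with the rank likelihood, yields posterior consistency of $S$. Without that theorem (or a proof of an equivalent statement), your first part is an assertion rather than an argument.

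The missing-data step has the same character. You appeal to ignorability plus the data-augmentation consistency of \citet{schafer1997analysis}, but that result concerns fully Gaussian data; it does not by itself cover the rank-likelihood/copula setting, where ordinal margins are identified only through ranks. The paper instead re-applies Murray et al.'s theorem under MCAR, supplemented by the observation that ordinary and polychoric/polyserial correlations estimated from pairwise-complete data remain consistent under MCAR --- which is also why the theorem is stated for MCAR rather than MAR. Your second half --- the identifiability lemmas (Lemmas~\ref{lemm:identifiability}, \ref{lemm:identifiability_lambda}, \ref{lemm:identifiability_D}), the sign and scale conventions, and the continuous-mapping/posterior-mean step --- is correct and coincides with the paper's proof.
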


\begin{proof}
	If $S = \Lambda C \Lambda^T + D$ is the response vector's covariance matrix, then its correlation matrix is $\widetilde{S} = V^{-\frac{1}{2}} S V^{-\frac{1}{2}} = V^{-\frac{1}{2}} \Lambda C \Lambda^T V^{-\frac{1}{2}} + V^{-\frac{1}{2}} D V^{-\frac{1}{2}} = \widetilde{\Lambda} C \widetilde{\Lambda}^T + \widetilde{D}$, where $V$ is a diagonal matrix containing the diagonal entries of $S$. We make use of Theorem 1 from~\citet{murray2013bayesian} to show the consistency of $\widetilde{S}$. Our factor-analytic prior puts positive probability density almost everywhere on the set of correlation matrices that have a $k$-factor decomposition. Then, by applying Theorem 1 in~\citet{murray2013bayesian}, we obtain the consistency of the posterior distribution on the response vector's correlation matrix for complete data, i.e.,
		\begin{equation}
		\label{eq:consistency_S}
		\lim_{n \rightarrow \infty} \Pi(\widetilde{S} \in \mathcal{V}(\widetilde{S}_0) | \bm{Z}_n \in \D(\bm{Y}_n) ) = 1 \; a.s. \; \forall \; \mathcal{V}(\widetilde{S}_0),
		\end{equation}
	where $\D(\bm{Y}_n)$ is the space restricted by observed data, and $\mathcal{V}(\widetilde{S}_0)$ is a neighborhood of the true parameter $\widetilde{S}_0$. When the data contain missing values that are completely at random (MCAR), we can also directly obtain the consistency of $\widetilde{S}$ by again using Theorem 1 in~\citet{murray2013bayesian}, with an additional observation that the estimation of ordinary and polychoric/polyserial correlations from pairwise complete data is still consistent under MCAR. That is to say, the consistency shown in Equation~(\ref{eq:consistency_S}) also holds for data with MCAR missing values.
	
	From this point on, to simplify notation, we will omit adding the tilde to refer to the rescaled matrices $\widetilde{S}$, $\widetilde{\Lambda}$, and $\widetilde{D}$. 
	Thus, $S$ from now on refers to the correlation matrix of the response vector. $\Lambda$ and $D$ refer to the scaled factor loadings and noise variance respectively.
	
	The Gibbs sampler underlying the BGCF approach has the posterior of $\Sigma$ (the correlation matrix of the integrated vector $\bm{X}$) as its stationary distribution. $\Sigma$ contains  
	$S$, the correlation matrix of the response random vector, in the upper left block and $C$ in the lower right block. Here $C$ is the correlation matrix of factors, which implicitly depends on the \textit{Gaussian copula factor model} from Definition 1 of the main paper via the formula $S = \Lambda C \Lambda^T + D$. In order to render this decomposition identifiable, we need to put constraints on $C$, $\Lambda$, $D$. Otherwise, we can always replace $\Lambda$ with $\Lambda U$ and $C$ with $U^{-1} C U^{-1}$, where $U$ is any $k \times k$ invertible matrix, to obtain the equivalent decomposition $S = (\Lambda U)(U^{-1} C U^{-T})(U^T \Lambda ^T) + D$. However, we have assumed that $\Lambda$ follows a particular sparsity structure in which there is only a single non-zero entry for each row. This assumption restricts the space of equivalent solutions, since any $\Lambda U$ has to follow the same sparsity structure as $\Lambda$. More explicitly, $\Lambda U$ maintains the same sparsity pattern if and only if $U$ is a diagonal matrix (Lemma~\ref{lemm:lambda}).
	
	By decomposing $S$, we get a class of solutions for $C$ and $\Lambda$, i.e., $U^{-1} C U^{-1}$ and $\Lambda U$,  where $U$ can be any invertible diagonal matrix. In order to get a unique solution for $C$, we impose two identifying conditions: 1) we restrict $C$ to be a correlation matrix; 2) we force the first non-zero entry in each column of $\Lambda$ to be positive. These conditions are sufficient for identifying $C$ uniquely (Lemma~\ref{lemm:identifiability}). We point out that these sufficient conditions are not unique. For example, one could replace the two conditions with restricting the first non-zero entry in each column of $\Lambda$ to be one. The reason for our choice of conditions is to keep it consistent with our model definition where $C$ is a correlation matrix. Under the two conditions for identifying $C$, factor loadings $\Lambda$ and residual variances $D$ are also identified except for the case in which there exists one factor that is independent of all the others and this factor only has two indicators. For such a factor, we have 4 free parameters (2 loadings, 2 residuals) while we only have 3 available equations (2 variances, 1 covariance), which yields an underdetermined system. Therefore, the identifiability of $\Lambda$ and $D$ relies on the observation that a factor has a single or at least three indicators if it is independent of all the others. See Lemmas~\ref{lemm:identifiability_lambda} and~\ref{lemm:identifiability_D} for detailed analysis.
	
	Now, given the consistency of $S$ and the unique smooth map from $S$ to $C$, $\Lambda$, and $D$, we obtain the consistency of the posterior mean of the parameter $C$, $\Lambda$, and $D$, which concludes our proof.

\end{proof}

%
%

\begin{lemm}
	\label{lemm:lambda}
	If $\Lambda = (\lambda_{ij})$ is a $p \times k$ factor loading matrix with only a single non-zero entry for each row, then $\Lambda U$ will have the same sparsity pattern if and only if $U = (u_{ij})$ is diagonal.
\end{lemm}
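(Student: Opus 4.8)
The plan is to prove the two implications separately, working directly from the explicit non-zero structure of $\Lambda$. For each row $i \in \{1,\ldots,p\}$, let $c(i) \in \{1,\ldots,k\}$ denote the unique column with $\lambda_{i,c(i)} \neq 0$, so that $\lambda_{ij} = 0$ for all $j \neq c(i)$. The single computation underlying both directions is that the $(i,j)$ entry of the product collapses to one term,
\[
(\Lambda U)_{ij} = \sum_{l=1}^{k} \lambda_{il}\, u_{lj} = \lambda_{i,c(i)}\, u_{c(i),j},
\]
i.e. the $i$-th row of $\Lambda U$ is the scalar $\lambda_{i,c(i)}$ times the $c(i)$-th row of $U$. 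Everything else is bookkeeping over this identity.

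For the ``if'' direction, I would suppose $U$ is diagonal (and, as a change-of-basis matrix, invertible, so $u_{jj} \neq 0$). Then $u_{c(i),j} = u_{jj}\,\mathbbm{1}[j = c(i)]$, and the display reduces to $(\Lambda U)_{ij} = \lambda_{ij}\, u_{jj}$, which is non-zero exactly when $\lambda_{ij}$ is; hence $\Lambda U$ and $\Lambda$ share the same sparsity pattern.

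For the ``only if'' direction, which carries the content, I would assume $\Lambda U$ has the same sparsity pattern as $\Lambda$ and fix a row $i$. Preservation of the pattern forces $(\Lambda U)_{ij} = 0$ for every $j \neq c(i)$; combined with the display and $\lambda_{i,c(i)} \neq 0$, this yields $u_{c(i),j} = 0$ for all $j \neq c(i)$. In words, the $c(i)$-th row of $U$ is supported only on its diagonal. To promote this from the rows indexed by the image of $c$ to \emph{all} rows of $U$, I would invoke the standing modelling assumption that every factor is measured, i.e. every column of $\Lambda$ contains at least one non-zero entry, so that $i \mapsto c(i)$ is surjective onto $\{1,\ldots,k\}$. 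Then each $r \in \{1,\ldots,k\}$ equals $c(i)$ for some $i$, giving $u_{r,j} = 0$ for all $j \neq r$, and therefore $U$ is diagonal.

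I expect the only genuine subtlety, rather than a real obstacle, to be this surjectivity step. Without the assumption that each factor has at least one indicator, a factor $r$ with an all-zero column in $\Lambda$ leaves the $r$-th row of $U$ entirely unconstrained, since it never enters $\Lambda U$, and $U$ need not be diagonal. I would therefore state this assumption explicitly and note that it is automatic for the pure measurement models considered here. The remaining work is the one-line entry computation above, with no estimates or limiting arguments involved.
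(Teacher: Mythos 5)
Your proof is correct and follows essentially the same route as the paper's: both rest on the row-collapse identity $(\Lambda U)_{ij} = \lambda_{i,c(i)}\,u_{c(i),j}$ together with the fact that every column of $\Lambda$ contains a non-zero entry, the paper merely phrasing the ``only if'' direction as a contradiction where you argue directly. Your version is in fact slightly more careful, since you make explicit the surjectivity of $i \mapsto c(i)$ and the invertibility of $U$ (needed so that $u_{jj} \neq 0$ in the ``if'' direction), both of which the paper uses implicitly.
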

	\begin{proof}
		($\Rightarrow$) We prove the direct statement by contradiction. We assume that $U$ has an off-diagonal entry that is not equal to zero. We arbitrarily choose that entry to be $u_{rs}, r, s \in \{1, 2, \ldots, k\}, r \neq s$. Due to the particular sparsity pattern we have chosen for $\Lambda$, there exists $q \in \{1, 2, \ldots, p\}$ such that $\lambda_{qr} \neq 0$ and $\lambda_{qs} = 0$, i.e., the unique factor corresponding to the response $Z_q$ is $\eta_r$. However, we have $(\Lambda U)_{qs} = \lambda_{qr} u_{rs} \neq 0$, which means $(\Lambda U)$ has a different sparsity pattern from $\Lambda$. We have reached a contradiction, therefore $U$ is diagonal.
		
		($\Leftarrow$) If $U$ is diagonal, i.e., $U = \diag(u_1, u_2, \ldots, u_k)$, then $(\Lambda U)_{ij} = \lambda_{ij} u_j$. This means that $(\Lambda U)_{ij} = 0 \iff \lambda_{ij} u_j = 0 \iff \lambda_{ij} = 0$, so the sparsity pattern is preserved.
	\end{proof}

\begin{lemm}[Identifiability of $C$] \label{lemm:identifiability}
	Given the factor structure defined in Section 3 of the main paper, we can uniquely recover $C$ from $S = \Lambda C \Lambda^T + D$ if 1) we constrain $C$ to be a correlation matrix; 2) we force the first element in each column of $\Lambda$ to be positive.
\end{lemm}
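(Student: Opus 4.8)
The plan is to reduce the statement to a gauge-fixing argument using the equivalence class already pinned down by Lemma~\ref{lemm:lambda}. Since $D$ is left untouched by the transformation $\Lambda \mapsto \Lambda U$, $C \mapsto U^{-1} C U^{-1}$ (for diagonal $U$ one has $U U^{-1} = U^{-1} U^T = I$, so the term $\Lambda U (U^{-1} C U^{-1}) U^T \Lambda^T$ collapses back to $\Lambda C \Lambda^T$), and since Lemma~\ref{lemm:lambda} guarantees that the pure-measurement sparsity pattern of $\Lambda$ is preserved \emph{exactly} when $U$ is invertible diagonal, every admissible decomposition of a fixed $S$ has the form $(\Lambda U, U^{-1} C U^{-1}, D)$ with $U = \diag(u_1,\ldots,u_k)$ invertible. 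Recovering $C$ uniquely therefore amounts to showing that Conditions 1 and 2 leave no residual freedom in $U$.

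Then I would impose the two conditions coordinate by coordinate. Writing the transformed interfactor matrix entrywise, $(U^{-1} C U^{-1})_{ij} = c_{ij}/(u_i u_j)$, so its diagonal is $c_{ii}/u_i^2$. Condition 1, that the transformed $C$ be a correlation matrix, forces $u_i^2 = c_{ii}$ for each $i$; this fixes $|u_i| = \sqrt{c_{ii}}$ but leaves the sign of each $u_i$ free, a remaining $2^k$-fold ambiguity. To eliminate the signs I would invoke Condition 2: the $j$-th column of $\Lambda U$ equals $u_j$ times the $j$-th column of $\Lambda$, so its first non-zero entry is $u_j \lambda_{f(j),j}$, where $f(j)$ indexes the first indicator of factor $j$. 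Requiring this to be positive forces $\sign(u_j) = \sign(\lambda_{f(j),j})$, which together with the already-fixed magnitude determines each $u_j$ uniquely.

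To package this cleanly as uniqueness, I would take two decompositions $(\Lambda_1, C_1, D)$ and $(\Lambda_2, C_2, D)$ that both satisfy Conditions 1 and 2 and yield the same $S$; by the above they are related by $\Lambda_2 = \Lambda_1 U$ and $C_2 = U^{-1} C_1 U^{-1}$. Unit diagonal of both $C_1$ and $C_2$ gives $u_i^2 = 1$, and positivity of the first non-zero loading in each column of both $\Lambda_1$ and $\Lambda_2 = \Lambda_1 U$ gives $u_i > 0$; hence $u_i = 1$ for all $i$, so $U = I$ and $C_1 = C_2$. Existence of a representative satisfying both conditions is immediate, by starting from any decomposition and setting $u_j = \sign(\lambda_{f(j),j})\sqrt{c_{jj}}$.

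I expect the difficulty to lie in bookkeeping rather than in any deep step. I must ensure $f(j)$ is well-defined, i.e. that every column of $\Lambda$ has a non-zero entry (guaranteed because each factor has at least one indicator), and that $c_{jj} > 0$ so that $\sqrt{c_{jj}}$ is real and $u_j$ is invertible (guaranteed by positive definiteness of $C$). The genuine conceptual obstacle, already dispatched by Lemma~\ref{lemm:lambda}, is the restriction of the gauge group from all invertible $U$ to diagonal $U$; once that is in hand, Condition 1 fixes magnitudes and Condition 2 fixes signs, and nothing further is required.
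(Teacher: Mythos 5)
Your opening step contains a genuine gap: you assert that, thanks to Lemma~\ref{lemm:lambda}, \emph{every} admissible decomposition of a fixed $S$ has the form $(\Lambda U,\, U^{-1} C U^{-1},\, D)$ with $U$ invertible diagonal, and all of your subsequent bookkeeping rests on this premise. Lemma~\ref{lemm:lambda} establishes no such thing: it only says that a transformation $\Lambda \mapsto \Lambda U$ of a \emph{given} $\Lambda$ preserves the sparsity pattern if and only if $U$ is diagonal. It does not say that two decompositions of the same $S$ must be related by any $U$ at all, and in general they need not be, because gauge transformations leave $D$ fixed while the decomposition of $S$ has a second, independent source of non-uniqueness: the classical communality indeterminacy, i.e., how the diagonal of $S$ is split between $\Lambda C \Lambda^T$ and $D$. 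Concretely, take one factor with two indicators and $S$ with unit diagonal and off-diagonal entry $s_{12} \in (0,1)$: for every $\lambda_1 \in (s_{12}, 1)$, the choice $\lambda_2 = s_{12}/\lambda_1$, $C = 1$, $D = \diag(1-\lambda_1^2,\, 1-\lambda_2^2)$ satisfies both of your conditions and reproduces $S$, yet these decompositions differ in $D$, so no diagonal $U$ relates any two of them. Your argument, if valid, would force $U = I$ between any two such decompositions and hence uniqueness of the whole triple $(\Lambda, C, D)$ --- contradicting the paper's own Lemma~\ref{lemm:identifiability_lambda}, which explicitly excludes exactly this two-indicator case from the identifiability of $\Lambda$ and $D$. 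So your gauge-fixing proves uniqueness of $C$ only within a single orbit $\{(\Lambda U,\, U^{-1} C U^{-1},\, D)\}$, whereas the lemma requires uniqueness across \emph{all} admissible decompositions.

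The paper's proof closes this gap by arguing constructively, in the style of Anderson and Rubin: writing $s_{ij} = c_{f(i)f(j)}\lambda_{if(i)}\lambda_{jf(j)}$ for $i \neq j$, it recovers each correlation $c_{qr}$ as an explicit function of \emph{off-diagonal} entries of $S$ --- entries in which $D$ plays no role --- via a three-case analysis (uncorrelated pair; both factors with a single indicator; at least one factor with two or more indicators, where $\lambda_{i1}^2 = s_{ij}s_{il}/s_{jl}$ and the sign is fixed by Condition 2). Since any admissible decomposition must satisfy these same equations, $C$ is pinned down regardless of how the diagonal of $S$ is apportioned between loadings and residuals. Your magnitude-and-sign analysis (Condition 1 fixes $\lvert u_j \rvert$, Condition 2 fixes $\sign(u_j)$) mirrors the informal discussion in Section~3.3 of the main text, but to become a proof it must be preceded by exactly this kind of argument showing that $C$ is determined by the $D$-free part of $S$; as it stands, the premise you need is at least as strong as the conclusion you are trying to prove.
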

	\begin{proof}
		Here we assume that the model has the stated factor structure, i.e., that there is some $\Lambda$, $C$, and $D$ such that $S = \Lambda C \Lambda^T + D$. We then show that our chosen restrictions are sufficient for identification using an argument similar to that in~\citet{anderson1956statistical}. 
		
		The decomposition $S = \Lambda C \Lambda^T + D$ constitutes a system of $\frac{p (p + 1)}{2}$ equations:
		\begin{equation} \label{eqn:decomposition}
		\begin{aligned}
		s_{ii} & = \lambda_{if(i)}^2 + d_{ii} \\
		s_{ij} & = c_{f(i)f(j)} \lambda_{if(i)} \lambda_{jf(j)} \: , \: i < j \:, \\
		\end{aligned}
		\end{equation} %
		
		where $S = (s_{ij}), \Lambda = (\lambda_{ij}), C = (c_{ij}), D = (d_{ij})$, and $f : \{1, 2, \ldots, p \} \to \{1, 2, \ldots, k \} $ is the map from a response variable to its corresponding factor. Looking at the equation system in~\eqref{eqn:decomposition}, we notice that each factor correlation term $c_{qr}, q \neq r$, appears only in the equations corresponding to response variables indexed by $i$ and $j$ such that $f(i) = q$ and $f(j) = r$ or vice versa. This suggests that we can restrict our analysis to submodels that include only two factors by considering the submatrices of $S, \Lambda, C, D$ that only involve those two factors. To be more precise, the idea is to look only at the equations corresponding to the submatrix $S_{f^{-1}(q) f^{-1}(r)}$, where $f^{-1}$ is the preimage of $\{1, 2, \ldots, k \}$ under $f$. Indeed, we will show that we can identify each individual correlation term corresponding to pairs of factors only by looking at these submatrices. Any information concerning the correlation term  provided by the other equations is then redundant.
		
		
		
		Let us then consider an arbitrary pair of factors in our model and the corresponding submatrices of $\Lambda$, $C$, $D$, and $S$. (The case of a single factor is trivial.) In order to simplify notation, we will also use $\Lambda$, $C$, $D$, and $S$ to refer to these submatrices. We also re-index the two factors involved to $\eta_1$ and $\eta_2$ for simplicity. In order to recover the correlation between a pair of factors from $S$, we have to analyze three separate cases to cover all the bases (see  Figure~\ref{fig:GaussianCopulaModelDemo} for examples concerning each case): %
		\begin{enumerate}
			\item The two factors are not correlated, i.e., $c_{12} = 0$. (There are no restrictions on the number of response variables that the factors can have.)
			\item The two factors are correlated, i.e., $c_{12} \neq 0$, and each has a single response, which implies that $Z_1 = \eta_1$ and $Z_2 = \eta_2$.
			\item The two factors are correlated, i.e., $c_{12} \neq 0$, but at least one of them has at least two responses.
		\end{enumerate}
		
		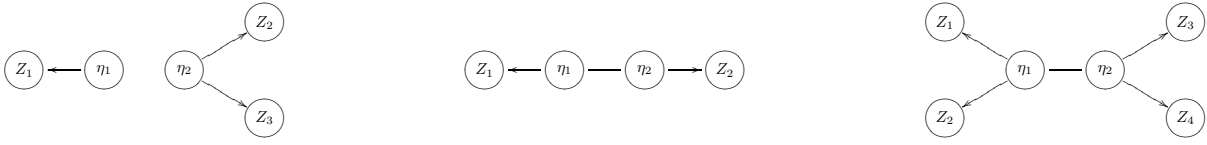
\begin{figure*}[!h]
			\hspace{10pt}
			\begin{tabular}{ccc}
				\parbox[t]{0.33\textwidth}{%
					\scalebox{0.6}{\centerline{\xymatrix @R=1em @C=2em{
								& & & *=<3em,2em>[o][F]{Z_{2}} \\
								*=<3em,2em>[o][F]{Z_1} & *=<3em,2em>[o][F]{\eta_1} \ar[l] &
								*=<3em,2em>[o][F]{\eta_2} \ar[ur] \ar[rd] & \\
								& & & *=<3em,2em>[o][F]{Z_{3}}  \\
				}}}}
				\parbox[t]{0.33\textwidth}{%
					\scalebox{0.6}{\centerline{\xymatrix @R=1em @C=2em{
								& & & *=<3em,2em>[o][B]{} \\
								*=<3em,2em>[o][F]{Z_1} & *=<3em,2em>[o][F]{\eta_1} \ar[l] \ar@{-}[r] &
								*=<3em,2em>[o][F]{\eta_2} \ar[r] &
								*=<3em,2em>[o][F]{Z_{2}} \\
								& & & *=<3em,2em>[o][B]{}  \\
				}}}}
				\parbox[t]{0.33\textwidth}{%
					\scalebox{0.6}{\centerline{\xymatrix @R=1em @C=2em{
								*=<3em,2em>[o][F]{Z_1} & & & *=<3em,2em>[o][F]{Z_{3}} \\
								& *=<3em,2em>[o][F]{\eta_1} \ar[ul] \ar[dl] \ar@{-}[r] &
								*=<3em,2em>[o][F]{\eta_2} \ar[ur] \ar[rd] &
								\\
								*=<3em,2em>[o][F]{Z_{2}} & & & *=<3em,2em>[o][F]{Z_{4}}  \\
				}}}}
			\end{tabular}
			\caption{Left panel: \textit{Case 1} ($c_{12} = 0$); Middle panel: \textit{Case 2} ($c_{12} \neq 0$ and only one response per factor); Right panel: \textit{Case 3} ($c_{12} \neq 0$ and at least one factor has multiple responses).}
			\label{fig:GaussianCopulaModelDemo}
		\end{figure*} 
		
		\textit{Case 1:} If the two factors are not correlated (see the example in the left panel of Figure~\ref{fig:GaussianCopulaModelDemo}), this fact will be reflected in the matrix $S$. More specifically, the off-diagonal blocks in $S$, which correspond to the covariance between the responses of one factor and the responses of the other factor, will be set to zero. If we notice this zero pattern in $S$, we can immediately determine that $c_{12} = 0$.

		\textit{Case 2:} If the two factors are correlated and each factor has a single associated response (see the middle panel of Figure~\ref{fig:GaussianCopulaModelDemo}), the model reduces to a Gaussian Copula model. Then, we directly get $c_{12} = s_{12}$ since we have put the constraints $Z=\eta$ if $\eta$ has a single indicator $Z$. 
		
		\textit{Case 3:} If at least one of the factors (w.l.o.g., $\eta_{1}$) is allowed to have more than one response (see the example in the right panel of Figure~\ref{fig:GaussianCopulaModelDemo}), we arbitrarily choose two of these responses. We also require one response variable corresponding to the other factor ($\eta_{2}$). We use $\lambda_{i1}, \lambda_{j1}$, and $\lambda_{l2}$ to denote the loadings of these response variables, where $i, j, l \in \{1, 2, \ldots, p \}$. From Equation \eqref{eqn:decomposition} we have:
		\begin{align*}
		s_{ij} & = \lambda_{i1} \lambda_{j1} \\
		s_{il} & = c_{12} \lambda_{i1} \lambda_{l2} \\
		s_{jl} & = c_{12} \lambda_{j1} \lambda_{l2} \:. 
		\end{align*} %
		Since we are in the case in which $c_{12} \neq 0$, which automatically implies that $s_{jl} \neq 0$, we can divide the last two equations to obtain $\frac{s_{il}}{s_{jl}} = \frac{\lambda_{i1}}{\lambda_{j1}}$. We then multiply the result with the first equation to get $\frac{s_{ij} s_{il}}{s_{jl}} = \lambda_{i1}^2$. Without loss of generality, we can say that $\lambda_{i1}$ is the first entry in the first column of $\Lambda$, which means that $\lambda_{i1} > 0$. This means that we have uniquely recovered $\lambda_{i1}$ and $\lambda_{j1}$.
		
		We can also assume without loss of generality that $\lambda_{l2}$ is the first entry in the second column of $\Lambda$, so $\lambda_{l2} > 0$. If $\eta_2$ has at least two responses, we use a similar argument to the one before to uniquely recover $\lambda_{l2}$. We can then use the above equations to get $c_{12}$. If $\eta_2$ has only one response, then $d_{ll} = 0$, which means that $s_{ll} = \lambda_{l2}^2$, so again $\lambda_{l2}$ is uniquely recoverable and we can obtain $c_{12}$ from the equations above.
		
		Thus, we have shown that we can correctly determine $c_{qr}$ only from $S_{f^{-1}(q) f^{-1}(r)}$ in all three cases. By applying this approach to all pairs of factors, we can uniquely recover all pairwise correlations. This means that, given our constraints, we can uniquely identify $C$ from the decomposition of $S$.
		
		%
		%
		%
	\end{proof}

\begin{lemm}[Identifiability of $\Lambda$] \label{lemm:identifiability_lambda}
	Given the factor structure defined in Section~\ref{sec:method} of the main paper, we can uniquely recover $\Lambda$ from $S = \Lambda C \Lambda^T + D$ if 1) we constrain $C$ to be a correlation matrix; 2) we force the first element in each column of $\Lambda$ to be positive; 3) when a factor is independent of all the others, it has either a single or at least three indicators.
\end{lemm}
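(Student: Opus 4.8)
The plan is to build directly on Lemma~\ref{lemm:identifiability}: once $C$ has been uniquely recovered from $S$, the remaining task is to solve the system~\eqref{eqn:decomposition} for the single non-zero loading $\lambda_{if(i)}$ attached to each response variable $i$. Since $\Lambda$ has exactly one non-zero entry per row and these entries are grouped by factor, I would fix an arbitrary factor $\eta_q$ and recover all loadings $\lambda_{iq}$ with $f(i) = q$ by examining only the relevant rows and columns of $S$. The argument then splits into cases according to how many indicators $\eta_q$ has and whether it is correlated with any other factor.

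First, the trivial case: if $\eta_q$ has a single indicator $Z_i$, then by the model definition we have set $\lambda_{iq} = 1$ and $d_{ii} = 0$, so the loading is fixed and needs no recovery. Second, if $\eta_q$ has at least three indicators $i, j, l$, I would reuse the triangle computation from Case~3 of Lemma~\ref{lemm:identifiability}: since $c_{qq} = 1$, the within-factor equations give $s_{ij} = \lambda_{iq}\lambda_{jq}$, $s_{il} = \lambda_{iq}\lambda_{lq}$, and $s_{jl} = \lambda_{jq}\lambda_{lq}$, whence $\lambda_{iq}^2 = s_{ij}s_{il}/s_{jl}$. Taking $i$ to be the first indicator, condition~2 forces $\lambda_{iq} > 0$, so $\lambda_{iq}$ is pinned down as the positive square root and every remaining loading follows from $\lambda_{kq} = s_{ik}/\lambda_{iq}$, its sign now determined.

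The delicate case, and the one that motivates condition~3, is a factor $\eta_q$ with exactly two indicators $i, j$. The within-factor block then supplies only the single equation $s_{ij} = \lambda_{iq}\lambda_{jq}$, which cannot separate the two loadings on its own. If $\eta_q$ is correlated with some other factor $\eta_r$ (which always carries at least one indicator $Z_l$), I would import the cross-factor equations $s_{il} = c_{qr}\lambda_{iq}\lambda_{lr}$ and $s_{jl} = c_{qr}\lambda_{jq}\lambda_{lr}$; because $c_{qr} \neq 0$ forces $s_{jl} \neq 0$, the ratio $s_{il}/s_{jl} = \lambda_{iq}/\lambda_{jq}$ combined with $s_{ij} = \lambda_{iq}\lambda_{jq}$ again yields $\lambda_{iq}^2 = s_{ij}s_{il}/s_{jl}$, and the sign convention completes the recovery --- precisely the Case~3 argument once more. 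The only remaining possibility, a two-indicator factor independent of all others, leaves the three equations $s_{ii} = \lambda_{iq}^2 + d_{ii}$, $s_{jj} = \lambda_{jq}^2 + d_{jj}$, and $s_{ij} = \lambda_{iq}\lambda_{jq}$ in four unknowns, an underdetermined system; condition~3 excludes exactly this configuration, so it never arises.

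The main obstacle is bookkeeping rather than deep difficulty: I must verify that every loading of every factor is covered by exactly one of these cases, that the sign constraint in condition~2 resolves the square-root ambiguity consistently across a whole column, and that the cross-factor step is legitimate, i.e.\ that $c_{qr} \neq 0$ really does guarantee the denominator $s_{jl}$ is non-zero so that no division by zero occurs. The conceptual crux is recognizing that a two-indicator factor becomes identifiable only by borrowing information from a correlated neighbour, which is exactly what condition~3 secures by ruling out the isolated two-indicator case.
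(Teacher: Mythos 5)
Your proposal is correct and takes essentially the same route as the paper's proof: the ratio computation $\lambda_{iq}^2 = s_{ij}s_{il}/s_{jl}$ with the sign pinned down by condition~2, cross-factor borrowing (Case~3 of Lemma~\ref{lemm:identifiability}) for a two-indicator factor that is correlated with a neighbour, and the counting argument (four unknowns versus three equations) showing that an isolated two-indicator factor is underdetermined, which is exactly what condition~3 excludes. The only difference is organizational --- the paper treats the independent-factor cases explicitly and defers correlated factors to the reasoning of Lemma~\ref{lemm:identifiability}, whereas you enumerate by indicator count --- but the underlying algebra is identical.
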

	\begin{proof}
		Compared to identifying $C$, we need to consider another case in which there is only one factor or there exists one factor that is independent of all the others (the former can be treated as a special case of the latter). When such a factor only has a single indicator, e.g., $\eta_1$ in the left panel of Figure~\ref{fig:GaussianCopulaModelDemo}, we directly identify $d_{11} = 0$ because of the constraint $Z_1 = \eta_1$. When the factor has two indicators, e.g., $\eta_2$ in the left panel of Figure~\ref{fig:GaussianCopulaModelDemo}, we have four free parameters ($\lambda_{22}$, $\lambda_{32}$, $d_{22}$, and $d_{33}$) while we can only construct three equations from $S$ ($s_{22}$, $s_{33}$, and $s_{23}$), which cannot give us a unique solution. Now we turn to the three-indicator case, as shown in Figure~\ref{fig:Demo2}. From Equation \eqref{eqn:decomposition} we have:
		\begin{align*}
		s_{12} & = \lambda_{11} \lambda_{21} \\
		s_{13} & =\lambda_{11} \lambda_{31} \\
		s_{23} & = \lambda_{21} \lambda_{31} \:.
		\end{align*} %
		We then have $\frac{s_{12}s_{13}}{s_{23}} = \lambda_{11}^2$, which has a unique solution for $\lambda_{11}$ together with the second constraint $\lambda_{11}>0$, after which we can naturally get the solutions to $\lambda_{21}$ and $\lambda_{31}$. For the other cases, the proof follows the same line of reasoning as Lemma~\ref{lemm:identifiability}.
		\begin{figure}[h]
			\centering
			\begin{tabular}{c}
				\parbox[t]{0.5\textwidth}{%
					\scalebox{0.7}{\centerline{\xymatrix @R=1em @C=2em{
								& *=<3em,2em>[o][F]{Z_{1}} \\
								*=<3em,2em>[o][F]{\eta_1} \ar[ur] \ar[r] \ar[rd] & *=<3em,2em>[o][F]{Z_{2}} \\
								& *=<3em,2em>[o][F]{Z_{3}}  \\
				}}}}
			\end{tabular}
			\caption{A factor model with three indicators.}
			\label{fig:Demo2}
		\end{figure}
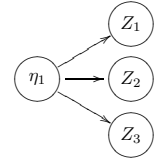 
	\end{proof}

\begin{lemm}[Identifiability of $D$] \label{lemm:identifiability_D}
	Given the factor structure defined in Section~\ref{sec:method} of the main paper, we can uniquely recover $D$ from $S = \Lambda C \Lambda^T + D$ if 1) we constrain $C$ to be a correlation matrix; 2) when a factor is independent of all the others, it has either a single or at least three indicators.
\end{lemm}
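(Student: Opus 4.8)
The plan is to reduce identification of the diagonal matrix $D$ to that of the squared loadings, and then to exploit that squaring removes the need for the sign convention used in Lemma~\ref{lemm:identifiability_lambda}. From the decomposition system~\eqref{eqn:decomposition}, the diagonal equations read $s_{ii} = \lambda_{if(i)}^2 + d_{ii}$, so $d_{ii} = s_{ii} - \lambda_{if(i)}^2$. Because $S$ is given, recovering each residual variance $d_{ii}$ is exactly equivalent to recovering $\lambda_{if(i)}^2$. This is the structural reason the positivity condition (Condition 2 of Lemma~\ref{lemm:identifiability_lambda}) can be dropped here: only the magnitude of each loading enters $D$, so constraining $C$ to be a correlation matrix together with the indicator-count restriction will suffice.

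First I would dispose of the single-indicator factors: when $Z_i = \eta_{f(i)}$ the model definition forces $\lambda_{if(i)} = 1$ and $d_{ii} = 0$, identifying $d_{ii}$ immediately. Next, for an indicator $i$ whose factor $\eta_q$ (with $q = f(i)$) carries at least three indicators, I would reuse the argument behind Lemma~\ref{lemm:identifiability_lambda}: choosing two further indicators $j, k$ of $\eta_q$ gives $s_{ij} = \lambda_{iq} \lambda_{jq}$, $s_{ik} = \lambda_{iq} \lambda_{kq}$, and $s_{jk} = \lambda_{jq} \lambda_{kq}$, so that $\lambda_{iq}^2 = s_{ij} s_{ik} / s_{jk}$; this yields $\lambda_{iq}^2$, and hence $d_{ii}$, with no appeal to a sign.

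The remaining admissible indicators belong to factors with exactly two indicators, which by the indicator-count hypothesis must be correlated with some other factor $\eta_r$ (Case 3 of Lemma~\ref{lemm:identifiability}). For such an $i$, with $j$ the sibling indicator of $\eta_q$ and $l$ any indicator of $\eta_r$, I would combine $s_{ij} = \lambda_{iq} \lambda_{jq}$ with $s_{il}/s_{jl} = \lambda_{iq}/\lambda_{jq}$ (valid since $c_{qr} \neq 0$ forces $s_{jl} \neq 0$) to obtain $\lambda_{iq}^2 = s_{ij} s_{il} / s_{jl}$, again fixing $d_{ii}$.

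Finally I would confirm that the case split is exhaustive and that the one excluded configuration is genuinely obstructive: a factor independent of all others with exactly two indicators leaves four unknowns $\lambda_{iq}, \lambda_{jq}, d_{ii}, d_{jj}$ constrained by only the three equations $s_{ii}, s_{jj}, s_{ij}$, an underdetermined system. The hard part is conceptual rather than computational — recognizing that passing to $\lambda_{if(i)}^2$ is precisely what lets the positivity condition be discarded; once that reduction is in place, each surviving case is a one-line consequence of equations already established for Lemmas~\ref{lemm:identifiability} and~\ref{lemm:identifiability_lambda}.
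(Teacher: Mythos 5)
Your proposal is correct and follows essentially the same route as the paper's proof: a case split over single-indicator factors, factors with at least three indicators, and two-indicator factors (which the hypothesis forces to be correlated with another factor), recovering each $\lambda_{if(i)}^2$ from covariance ratios of the form $s_{ij}s_{ik}/s_{jk}$ or $s_{ij}s_{il}/s_{jl}$ and then setting $d_{ii} = s_{ii} - \lambda_{if(i)}^2$. Your explicit observation that only squared loadings enter $D$, so the sign condition of Lemma~\ref{lemm:identifiability_lambda} can be dropped, is left implicit in the paper but is precisely why the lemma's hypotheses omit that condition.
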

	\begin{proof}
		We conduct our analysis case by case. For the case where a factor has a single indicator, we trivially set $d_{ii} = 0$. For the case in Figure~\ref{fig:Demo2}, it is straightforward to get $d_{11} = s_{11} - \lambda_{11}^2$ from $\frac{s_{12}s_{13}}{s_{23}} = \lambda_{11}^2$ (the same for $d_{22}$ and $d_{33}$). Another case we need to consider is Case 3 in Figure~\ref{fig:GaussianCopulaModelDemo}, where we have $\frac{s_{ij} s_{il}}{s_{jl}} = \lambda_{i1}^2$ (see analysis in Lemma~\ref{lemm:identifiability}), based on which we obtain $d_{ii} = s_{ii} - \lambda_{i1}^2$. By applying this approach to all single factors or pairs of factors, we can uniquely recover all elements of $D$.
	\end{proof}

\section*{Appendix B: Extended Simulations}

This section continues the experiments in Section~\ref{sec:simulation} of the main paper, in order to check the influence of the number of categories for ordinal data and the extent of non-normality for nonparanormal data.

\begin{figure}[t]
	\centering	
	\subfloat[Interfactor Correlations]{\includegraphics[scale=0.7]{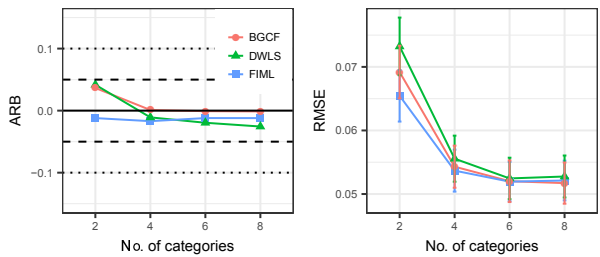}}\label{fig:corr_cat}
	\hfill
	\subfloat[Factor Loadings]{\includegraphics[scale=0.7]{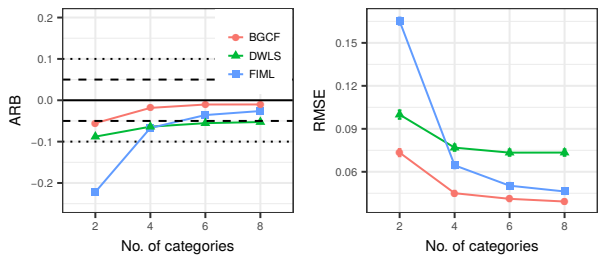}}\label{fig:loadings_cat}
	\caption{Results for $n = 500$ and $\beta = 10\%$ obtained by BGCF, DWLS with PD, and FIML on ordinal data with different numbers of categories, showing the mean of ARB (left panel) and the mean of RMSE with 95\% confidence interval (right panel) over 100 experiments for (a) interfactor correlations and (b) factor loadings, where dashed lines and dotted lines in left panels denote $\pm 5\%$ and $\pm 10\%$ bias respectively.}
	\label{fig:num_cat}
\end{figure}

\subsection*{B1: Ordinal Data with Different Numbers of Categories}

In this subsection, we consider ordinal data with various numbers of categories $c \in \{2, 4, 6, 8\}$, in which the sample size and missing values percentage are set to $n = 500$ and $\beta = 10\%$ respectively. Figure~\ref{fig:num_cat} shows the results obtained by BGCF (Bayesian Gaussian copula factor), DWLS (diagonally weighted least squares) with PD (pairwise deletion), and FIML (full information maximum likelihood), providing the mean of ARB (average relative bias) and the mean of RMSE (root mean squared error) with 95\% confidence interval over 100 experiments for (a) interfactor correlations and (b) factor loadings. In the case of two categories, FIML underestimates factor loadings dramatically, DWLS obtains a moderate bias, while BGCF just gives trivial bias. With an increasing number of categories, FIML gets closer and closer to BGCF, but still BGCF is favorable. 

\subsection*{B2: Nonparanormal Data with Different Extents of Non-normality}

In this subsection, we consider nonparanormal data, in which we use the degrees of freedom $df$ of a $\chi^2$-distribution to control the extent of non-normality (see Section 5.1 of the main paper for details). The sample size and missing values percentage are set to $n = 500$ and $\beta = 10\%$ respectively, while the degrees of freedom varies $df \in \{2, 4, 6, 8\}$.

Figure~\ref{fig:nonpara} shows the results obtained by BGCF, DWLS with PD, and FIML, providing the mean of ARB (left panel) and the mean of RMSE with 95\% confidence interval (right panel) over 100 experiments for (a) interfactor correlations and (b) factor loadings. The major conclusion drawn here is that, while a nonparanormal transformation has no effect on our BGCF approach, FIML is quite sensitive to the extent of non-normality, especially for factor loadings. 

\begin{figure}[H]
	\centering	
	\subfloat[Interfactor Correlations]{\includegraphics[scale=0.7]{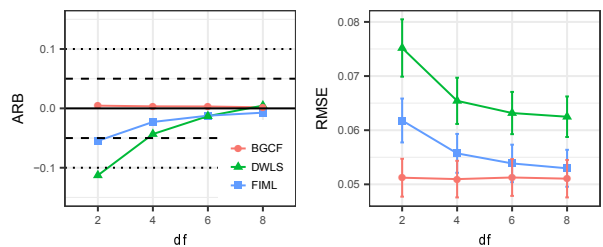}}\label{fig:corr_nonpara}
	\hfill
	\subfloat[Factor Loadings]{\includegraphics[scale=0.7]{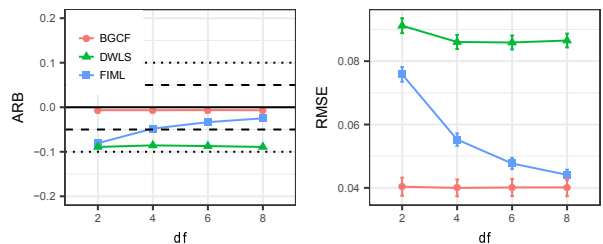}}\label{fig:loadings_nonpara}
	\caption{Results for $n = 500$ and $\beta = 10\%$ obtained by BGCF, DWLS with PD, and FIML on nonparanormal data with different extents of non-normality, for the same experiments as in Figure~\ref{fig:num_cat}.}
	\label{fig:nonpara}
\end{figure}

%


%
%

\end{document}